\theoremstyle{plain}
\newtheorem{thm}{Theorem}
\newtheorem{lem}[thm]{Lemma}
\theoremstyle{definition}
\newtheorem{defn}{Definition}
\newtheorem{exmp}{Example}
\theoremstyle{remark}
\title{Anytime Inference in Valuation Algebras}
\author{Abhishek Dasgupta\institute{Department of Computer Science, University of Oxford,\newline e-mail: abhishek.dasgupta@cs.ox.ac.uk}
\and Samson Abramsky\institute{Department of Computer Science, University of Oxford,\newline e-mail: samson.abramsky@cs.ox.ac.uk} }
\begin{document}

\maketitle

\begin{abstract}
The novel contribution of
this work is the construction of anytime algorithms in a generic framework, which
automatically gives us instantiations in many useful domains. We also show that semiring induced valuation algebras, an important subclass of valuation algebras are amenable to anytime inference.
Anytime inference, and inference algorithms in general have been a well-researched area in the last few decades. Inference is an important component in most pattern recognition and machine learning algorithms; it also shares theoretical connections with other branches of computer science like theorem-proving. Anytime inference is important in applications with limited space, for efficiency reasons, such as in continuous learning and robotics. In this article we
construct an anytime inference
algorithm based on principles introduced in the theory of generic
inference; and in particular, extending the work done on ordered valuation
algebras \cite{Haenni20041}.

Keywords: Approximation; Anytime algorithms; Resource-bounded computation; Generic inference;
Valuation algebras; Local computation; Binary join trees.
\end{abstract}

\section{Introduction}
The inference problem is one of the most-important and well-studied problems in the field of statistics and machine learning. Inference can be considered as the (1) combination of information from various sources, which could be in the form of probability distributions from a probabilistic graphical model \cite{graphicalmodels}, belief functions in Dempster-Shafer theory \cite{dempster1968generalization,josang2012dempster} or tables in a relational database; and (2) subsequent focusing or projection to variables of interest, which corresponds to projection for variables in probabilistic graphical models, or a query in the relational database. Our work is based on the theory of generic inference \cite{generic-inference} which abstracts and generalises the inference problem across these different areas.

The utility of generic inference can be understood as an analogue to sorting, which is agnostic to the specific data type, as long as there is a total order. Generic inference generalises
inference algorithms by abstracting the essential components
of information in an algebraic structure. In \cite{LauritzenSpiegelhalter}, an algorithm was defined which solved the inference problem
on Bayesian networks, using a technique called \emph{local computation}.
It was noted in \cite{shenoy2008axioms} that the same algorithm could be
used to solve the inference problem on belief functions, and
a sufficient set of axioms were proposed for an algebraic framework that is necessary
for the generic inference algorithm. This was extended by Kohlas
into a theory of valuation algebras, and a computer implementation of
inference over valuation algebras along with concrete instantiations was
developed in \cite{pouly2010nenok}.

Generic inference as formulated in \cite{generic-inference} solves the inference problem in the exact case.
As exact inference is an \#P-hard problem \cite{valiant1979complexity}, in practice, we need frameworks for approximate inference. Approximation schemes exist for specific instances of valuation
algebras (probability potentials \cite{dechter1998bucket}, belief potentials \cite{Haenni2002103}); as well as for the generic case \cite{Haenni20041}, but there is no such generic framework for anytime inference. In this paper, we extend the approximate inference framework in \cite{Haenni20041} to support anytime inference.

In anytime algorithms, instead of an
algorithm terminating after an unspecified amount of time with a specific
accuracy, we are able to tune the accuracy via a parameter passed to the
algorithm. The algorithm can also be designed to be interruptible,
gradually improving its accuracy until terminated by the user. Such
algorithms are important in online learning where new data is being
streamed in \cite{ueno2006anytime}, in intelligent systems, decision
making under uncertainty \cite{horsch1998anytime} and robotics \cite{zilberstein1996using} where
due to the limitation of interacting in real-time there may not be
sufficient time to compute an exact solution. We shall consider \emph{interruptible} anytime algorithms which can be interrupted at any time and the approximation can be improved by resuming the
algorithm. This affords the greatest flexibility from the user's perspective, with applications of such algorithms to real-time systems such as sensor networks and path planning.

Table \ref{context-work} notes the previous work done in the area of inference algorithms, in both the generic case and for the specific case of probability potentials, and situates our work in context.

\begin{table}
\begin{tabular}{|c|c|c|}
\hline
\emph{inference}&generic&probability potentials\\
\hline
exact&\cite{generic-inference}&\cite{pearl1982reverend}\\
\hline
approximate&\cite{Haenni20041}&loopy belief propagation, \cite{dechter1998bucket}\\
\hline
anytime&[our work]&\cite{ramos2005anytime}\\
\hline
\end{tabular}
\caption{Our work, in relation to various inference algorithms and frameworks}
\label{context-work}
\end{table}
We note that the successive rows in the above table refine upon the previous one, and include it; the approximate inference framework can also
perform exact inference, and the anytime inference framework presented here gives an
approximate solution which incrementally improves with time, converging
on the solution obtained from exact inference given sufficient time.

This article is divided into the following sections. Section 2 reviews the framework of valuation algebras
and ordered valuation algebras. Section 3 introduces our extension to
ordered valuation algebras to support anytime inference, and proves soundness and completeness
theorems for anytime inference. Section 4 describes instances of the framework, including its application to anytime inference in semiring-induced valuation algebras. Section 5 gives a complexity analysis
of the algorithm. Section 6 shows implementation results of anytime inference on a Bayesian network. Section 7 concludes.

\section{Valuation Algebras}

Valuation algebras are the core algebraic structure in the theory of generic inference. In a valuation algebra, we consider the various pieces of information in an inference problem
(conditional probability distributions, belief potentials, relational database tables, etc.) as elements
in an algebraic structure with a set of axioms. We review the axioms of valuation algebra \cite{generic-inference} below, preceded by some remarks on notation.

All operations in the valuation
algebra are defined on elements denoted by lowercase
Greek letters: $\phi, \psi, \ldots$. We can think of a valuation as
the information contained by the possible values of a set of variables, which
are denoted by Roman lowercase letters (with possible subscripts): $x,y,\ldots$ and denote sets of
variables by uppercase letters: $S,T,\ldots$. Each valuation refers
to the information contained in
a set of variables which we call the \emph{domain} of a valuation, denoted
by $d(\phi)$ for a valuation $\phi$. For a finite set of variables $D$, $\Phi_D$
denotes the set of valuations $\phi$ for which $d(\phi) = D$. Thus, the set
of all possible valuations for a countable set of variables $V$ is
\begin{equation}
\Phi = \bigcup_{D\subseteq  V} \Phi_D
\end{equation}
If $\hat D = \mathcal{P}_f(V)$ the finite powerset of $V$, and $\Phi$ the set of valuations
with domains in $\hat D$; we define the following operations on $\langle \Phi, \hat D\rangle$:
\begin{enumerate*}[label=(\roman*)]
\item \emph{labeling}: $\Phi \rightarrow \hat D; \phi \mapsto d(\phi)$
\item \emph{combination}: $\Phi \times \Phi \mapsto \Phi; (\phi,\psi) \mapsto \phi \otimes \psi$
\item \emph{projection}: $\Phi \times \hat D \rightarrow \Phi; (\phi, X) \mapsto \phi^{\downarrow X}\ \mathrm{for}\ X \subseteq d(\phi)$
\end{enumerate*}

These are the basic operations of a valuation algebra. Using the view of valuations as pieces of information which refer to questions as valuations, the labelling operation
tells us which set of variables the valuation refers to; the combination operation
aggregates the information, and the projection operation focuses the information
on a particular question (query) of interest. Projection is also referred to
as \emph{focusing} or \emph{marginalization}. The following axioms are then imposed
on $\langle \Phi, \hat D \rangle$:

(A1) \emph{Commutative semigroup}: $\Phi$ is associative and commutative under $\otimes$

(A2) \emph{Labeling}: For $\phi,\psi \in \Phi$, $d(\phi \otimes \psi) = d(\phi) \cup d(\psi)$.

(A3) \emph{Projection}: For $\phi \in \Phi, X \in \hat D$ and $X \subseteq d(\phi)$,
$d(\phi^{\downarrow X}) = X$. Alternatively this is equivalent to the following
\emph{elimination} operation, $\phi^{\downarrow X} = \phi^{-(d(\phi) \backslash X)}$ where
all the variables except those in $X$ are eliminated.

(A4) \emph{Transitivity}: For $\phi \in \Phi$ and $X \subseteq Y \subseteq d(\phi)$,
$(\phi^{\downarrow Y})^{\downarrow X} = \phi^{\downarrow X}$.

(A5) \emph{Combination}: For $\phi, \psi \in \Phi$ with $d(\phi) = X,\ d(\psi) = Y$
and $Z \in D$ such that $X \subseteq Z \subseteq X \cup Y$,
$(\phi \otimes \psi)^{\downarrow Z} = \phi \otimes \psi^{\downarrow Z \cap Y}$.

(A6) \emph{Domain}: For $\phi \in \Phi$ with $d(\phi) = X$,
$\phi^{\downarrow X} = \phi$.

\parskip 0.2cm
For the intuitive reading of these axioms, we refer the reader to \cite{generic-inference,shenoy2008axioms}.

Before proceeding to approximate inference, we formally define the inference problem:
\begin{defn}
The \emph{inference problem} is the task of computing
\begin{equation}
\phi^{\downarrow X} = (\phi_1 \otimes \cdots \otimes \phi_r)^{\downarrow X}
\end{equation}
for a given knowledgebase $\{\phi_1,\ldots,\phi_r\} \subseteq \Phi$; domain $X$ is the \emph{query} for the inference problem.
\end{defn}

Next we consider \emph{approximate inference}. Existing approximation schemes, like the mini-bucket scheme \cite{dechter1998bucket} are either not general enough or do not provide a reliable measure of the approximation and how to improve the approximation in an anytime algorithm. In this article, we have used the ordered valuation algebra framework defined in \cite{Haenni20041} as a basis for constructing an anytime algorithm. We thus review the extra axioms of the ordered valuation algebra framework, which
introduces the notion of a partial order into the valuation algebra, and defines a partial combination operator
$\otimes_t$ to construct approximate inference algorithms.

Firstly we define a relation $\succeq$ which represents an information ordering.
If $\phi,\phi'$ are
two valuations, then $\phi \succeq \phi'$ means that $\phi$ is
\emph{more complete} than $\phi'$. Intuitively, the information contained
in $\phi$ is more informative and a better approximation than the information
contained by $\phi'$; generally this means $\phi'$ has a more compact or sparse representation
than $\phi$. Furthermore, we assume that this relation is a \emph{partial order}. It is also reasonable to assume that approximations are only valid for valuations
with equal domains; thus $\phi \succeq \phi'$ implies $d(\phi) = d(\phi')$ for all $\phi,\phi' \in \Phi$.
Thus $\succeq$ actually defines separate completeness relations $\succeq_D$ for each sub-semigroup $\Phi_D$.

We also impose the condition of each sub-semigroup $\Phi_D$ having a \emph{zero}
element, denoted by $n_D$, where $\phi \otimes n_D = n_D \otimes \phi = n_D$. For notational simplicity we
shall also denote the neutral element by $\varnothing$ (without a subscript), denoting the appropriate
neutral element corresponding to a particular domain.

An ordered valuation algebra is still a valuation algebra, so it retains
all the axioms (A1)-(A6) introduced previously. The additional axioms are about how $\succeq$ behaves under
combination and marginalization:

(A7) \emph{Partial order}: There is a partial order $\succeq$ on $\Phi$
such that $\phi \succeq \phi'$ implies $d(\phi) = d(\phi')$ for all
$\phi,\phi' \in \Phi$.

(A8) \emph{Zero element}: We assume that the zero element for the combination operation, $n_D$ is the least element of the approximation order $\succeq_D$ for all $D \subseteq V$. Also, since
zero elements for a particular domain are unique, $n_{D_1} \otimes n_{D_2} = n_{D_1 \cup D_2}$ for
$D_1,D_2 \subseteq V$. Also, $n_D^{\downarrow D'}= n_{D'}$ for all $D' \subseteq D$.

(A9) \emph{Combination preserves partial order}: If $\phi_1,\phi_1',\phi_2,\phi_2' \in \Phi$
are valuations such that $\phi_1 \succeq \phi_1'$ and $\phi_2 \succeq \phi_2'$, then
$\phi_1 \otimes \phi_2 \succeq \phi_1' \otimes \phi_2'$

(A10) \emph{Marginalisation preserves partial order}: If $\phi,\phi' \in \Phi$ are
valuations such that $\phi \succeq \phi'$, then $\phi^{\downarrow D} \succeq \phi'^{\downarrow D}$
for all $D \subseteq d(\phi) = d(\phi')$.

\begin{defn}
The \emph{time-bounded combination operator} \cite{Haenni20041} $\otimes_t: \Phi \times \Phi \rightarrow \Phi$ is used to approximate the exact computation during the propagation phase. $\otimes_t$ performs a partial combination of two valuations within time $t$ units, where $t \in \mathbb{R}^+$. The following properties are satisfied by $\otimes_t$:

(R1) $\phi_1 \otimes \phi_2 \succeq \phi_1 \otimes_t \phi_2$.

(R2) $\phi_1 \otimes_{t'} \phi_2 \succeq \phi_1 \otimes_t \phi_2$ for all $t' > t$.

(R3) $\phi_1 \otimes_0 \phi_2  = n_{d(\phi_1) \cup d(\phi_2)}$.

(R4) $\phi_1 \otimes_\infty \phi_2 = \phi_1 \otimes \phi_2$.
\end{defn}

\begin{defn}
    \label{def-ordered-valuation-algebra}
Such a system $\langle \Phi,V,\succeq,d,\otimes,\downarrow,\otimes_t \rangle$ of valuations $\Phi$, variables $V$, a completeness relation $\succeq$ and a time-bounded combination operation $\otimes_t$ is called an \emph{ordered valuation algebra},
if the labeling operations $d$, combination $\otimes$ and marginalization $\downarrow$
satisfy (A1)-(A10).
\end{defn}

\begin{defn}
A binary join tree (BJT) $N = \langle V,E\rangle$ corresponding to a
knowledgebase $\{\phi_1,\ldots,\phi_r\}$ is a covering junction tree for
the inference problem, constructed in a manner such that the tree is binary. The valuations in the knowledgebase
form the leaves of the tree, thus $|V(N)| = 2r - 1, |E(N)| = 2r - 2$, while
the query $X \subseteq d(\mathit{root}(N))$. Inference takes place by message passing in the BJT (for details of the algorithm, see \cite{shenoy1997binary, Haenni20041}).
In the next section we shall modify this message passing algorithm to cache partial
valuations for anytime inference.
\end{defn}

\section{Anytime Ordered Valuation Algebras}

In this section, we augment ordered valuation algebras in a structure
we refer to as \emph{anytime ordered valuation algebras}. We
introduce the extension, and in the following section give examples of
anytime ordered valuation algebras. The primary purpose of introducing anytime ordered
valuation algebras is to develop an anytime inference algorithm
within the framework of generic inference. Such extensions
preserve the generic structure of valuation algebras, but add restrictions to simplify or add features to
the inference algorithm; in another instance, valuation algebras were extended to weighted valuation algebras
to study communication complexity \cite{Pouly05minimizingcommunication}.

Before defining anytime ordered valuation algebras, we define
a couple of prerequisites; the \emph{composition operation} and a \emph{truncation function}.

\begin{defn}
The \emph{composition operator}, $\oplus:\Phi \times \Phi \rightarrow \Phi; (\phi', \phi'') \mapsto \phi$ combines valuations $\phi'$ and $\phi''$ into a valuation $\phi$ more complete than either ($\phi \succeq \phi', \phi \succeq \phi''$). This is not to be confused with the combination operation $\otimes$ which generally combines valuations from different domains. The valuations being composed belong to the same
approximation order $\succeq_D$, where $D = d(\phi') = d(\phi'') = d(\phi)$. It is natural in this context to consider whether composition should be a supremum operation. However, this cannot be assumed in general.
\end{defn}
\begin{defn}
The \emph{truncation function} $\rho: \Phi \times \mathbb{R}^+ \rightarrow \Phi$
performs a truncation of the information contained in the valuation, according
to the real valued parameter. Also, $\rho$ is defined to be \emph{monotonically increasing}
with the real valued parameter, thus $\rho(\phi,k)\succeq \rho(\phi,k')\ \mathrm{whenever}\ k \ge k'$.
\end{defn}
The time-bounded combination operation $\otimes_t$ can be recast such that truncation of the original pair of valuations followed by exact combination is equivalent to doing a time-bounded combination:
\begin{equation}
\phi_1 \otimes_t \phi_2 = \rho(\phi_1, k_1) \otimes \rho(\phi_2, k_2)
\label{otimes-t-k1-k2}
\end{equation}
The parameters $k_1, k_2$ determining the truncated portions of $\phi_1, \phi_2$ will be important
later in defining the partial valuations which will be used in the refinement algorithm for
anytime inference. 
As $k_1, k_2$ are parameters that depend on the particular valuations $\phi_1, \phi_2$ and the time $t$, this assumes a function $K(\phi_1,\phi_2,t) = (k_1, k_2)$.

Following these two definitions, we extend the system of axioms (A1-A10) for
ordered valuation algebras, with the properties (P1) and (P2):

(P1) The combination operation $\otimes$ distributes over $\oplus$:
\begin{flalign}
\quad& (\phi_1' \oplus \phi_1'')\otimes(\phi_2' \oplus \phi_2'')=\nonumber &\\
\quad& (\phi_1' \otimes \phi_2') \oplus \underbrace{(\phi_1' \otimes \phi_2'')
 \oplus (\phi_1'' \otimes \phi_2') \oplus (\phi_1'' \otimes \phi_2'')}_{\Large \textsc{refine}'(\phi_1',\phi_1'',\phi_2',\phi_2'')} &
\end{flalign}

Here, $\phi_1' \otimes \phi_2' = \rho(\phi_1, k_1) \otimes \rho(\phi_2, k_2)$ is a truncated valuation of the exact combined valuation $\phi_1 \otimes \phi_2$; $\textsc{refine}'$ is the part of the exact valuation that needs to be composed with the truncated valuation $\phi'_1 \otimes \phi'_2$ to complete the valuation. We also use the time-bounded operation $\textsc{refine}'_t$ for the same operation bounded by a time $t$, with an analogous definition in terms of truncation functions as $\otimes_t$ in equation \ref{otimes-t-k1-k2}:
\begin{equation}
\textsc{refine}^\prime_t(\phi_1',\phi_1'',\phi_2', \phi_2'') =
\textsc{refine}^\prime(\phi_1', \rho(\phi_1'', k_1), \phi_2', \rho(\phi_2'', k_2))
\end{equation}
where the parameters $k_1, k_2$ are obtained from an assumed function $K'(\phi_1,\phi_1',\phi_2',\phi_2'', t) = (k_1,k_2)$.

\vskip 0.2cm
(P2) The projection operation $\downarrow$ distributes over $\oplus$:
\begin{equation}
(\phi' \oplus \phi'')^{\downarrow D} = \phi'^{\downarrow D} \oplus
 \phi''^{\downarrow D}, D \subseteq d(\phi).
\end{equation}
\vskip 0.2cm
We can now formally define the anytime ordered valuation algebra.

\begin{defn}
An \emph{anytime ordered valuation algebra} is an ordered valuation
algebra $\langle V,\Phi,d,\otimes,\downarrow,\otimes_t,\succeq\rangle$ with the additional operations of composition
$\oplus$ and the function $\rho$, making the structure $\langle V,
\Phi,d,\rho,\otimes,\downarrow,\oplus,\otimes_t,\succeq\rangle$,
which satisfies properties (P1) and (P2).
\end{defn}

We show by construction that
the composition operator\\ $\oplus: \Phi \times \Phi \rightarrow \Phi$ with (P1, P2) along with the truncation function $\rho: \Phi \times \mathbb{R}^+
\rightarrow \Phi$ is \emph{sufficient} to construct a
refinement algorithm to improve the accuracy of a valuation.

To describe a refinement algorithm to improve upon the result provided by
\textsc{inward}$(N,t)$, we need to cache the partial valuations at each
step so that we can use $\textsc{refine}'$ to improve upon them. We use
a modified version of the propagation algorithm \cite{shenoy1997binary,Haenni20041}, where $\tau$ and
$\bar\tau$ store the partial and complementary partial valuations
respectively for a particular BJT node, where the complementary partial valuation $\bar\rho(\phi, k)$ is such that $\bar\rho(\phi, k) \oplus \rho(\phi, k) = \phi$. In the following procedures, $\Delta(n) = d(n)\backslash d(P(n))$ is the set of variables to be eliminated as we propagate messages to the parent node. To get the solution to the inference problem at the final step, we also define $\Delta(\mathit{root}(N)) = d(\mathit{root}(N)) \backslash X$ where $X$ is the query. There are $r$ valuations in the knowledgebase resulting in $r-1$ combination steps in the BJT. $P(n)$ is the parent of $n$, $\phi(n)$ is the valuation at node $n$, $\phi_s(n)$ is the message from $n$ to $P(n)$; $L(n), R(n)$ are the left and right nodes of $n$ respectively and
\begin{equation}
\mathit{next}(N) = \{n \in N: \phi_s(n) = \mathit{nil},
  \phi_s(L(n)) \ne \mathit{nil}, \phi_s(R(n)) \ne \mathit{nil} \}
\end{equation}
Both $\textsc{inward}(N,t)$ and $\textsc{refine}(N,t)$ return valuations which are the (approximate) solution to the inference problem.

\hrulefill
\begin{algorithmic}[1]
\Procedure{inward}{$N,t$}
\State $s \leftarrow r-1$;
\State initialise timer to $t$ units.
\State for all $n \in \mathit{leaves}(N)$ do $\phi_s(n) \leftarrow \phi(n)^{-\Delta(n)}$
\While {$\mathit{next}(N) \ne \emptyset$}
\State \textbf{select} $n \in next(N)$
\State $(k_1, k_2) \gets K(\phi_s(L(n)), \phi_s(R(n)), t/s)$
\State $\phi(n) \gets \phi_s(L(n)) \otimes_{t/s} \phi_s(R(n))$
\State $\tau(L(n)) \gets \rho(\phi_s(L(n)), k_1)$
\State $\tau(R(n)) \gets \rho(\phi_s(R(n)), k_2)$
\State $\bar\tau(L(n)) \gets \bar\rho(\phi_s(L(n)), k_1)$
\State $\bar\tau(R(n)) \gets \bar\rho(\phi_s(R(n)), k_2)$
\State $\phi_s(n) \leftarrow \phi(n)^{-\Delta(n)}$
\State $s \leftarrow s-1$
\State $t \leftarrow \mathit{timer}()$
\EndWhile

\State \Return $\phi_s(\mathit{root}(N))$
\EndProcedure
\end{algorithmic}
\hrulefill

We can use the cached partial valuations in $\tau$ and $\bar\tau$ to define
the refinement algorithm that follows in a similar manner to the algorithm in \cite{Haenni2002103}.

\hrulefill
\begin{algorithmic}[1]
    \Procedure{refine}{$N,t$}
\State \textbf{initialise} timer to $t$ units
\State $s\gets r-1$
\While{$next(N)\not=\emptyset$}
\State \textbf{select} $n \in next(N)$
\State $(k_1, k_2) \gets K'(\tau(L(n)), \bar\tau(L(n)), \tau(R(n)), \bar\tau(R(n)), t/s)$
\State $\nu \gets \textsc{refine}'_{t/s}(\tau(L(n)), \bar\tau(L(n)), \tau(R(n)), \bar\tau(R(n)))$
\State $t \gets timer()$
\State $\tau(L(n)) \gets \tau(L(n)) \oplus \rho(\bar\tau(L(n)), k_1)$
\State $\tau(R(n)) \gets \tau(R(n)) \oplus \rho(\bar\tau(R(n)), k_2)$
\State $\bar\tau(L(n)) \gets \bar\rho(\bar\tau(L(n)), k_1)$
\State $\bar\tau(R(n)) \gets \bar\rho(\bar\tau(R(n)), k_2)$
\State $\phi(n)\gets \phi(n) \oplus \nu$
\State $\bar\tau(n)\gets\bar\tau(n) \oplus \nu^{-\Delta(n)}$
\State $s\gets s-1$
\EndWhile\label{euclidendwhile}
\State \textbf{return} $\phi_s(root(N))$
\EndProcedure
\end{algorithmic}
\hrulefill	

This procedure refines the existing valuations in the binary join tree $N$, taking at most time $t$ units. We ensure that the algorithm
is interruptible in lines 9--12 using appropriate caching of partial valuations. A diagram of the truncation of a valuation is shown below
to illustrate anytime refinement.
\begin{figure}[htp]
\centering
\includegraphics[scale=0.7]{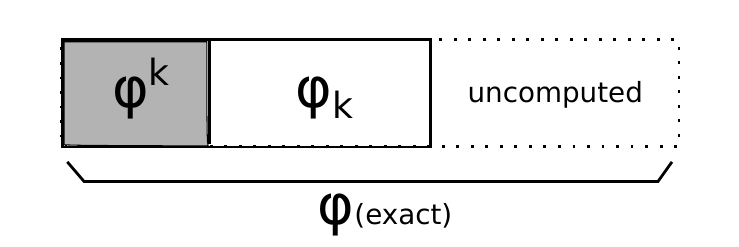}
\end{figure}

Here, and in the following proof, the notation $\phi^k := \rho(\phi,k)$
and $\phi_k := \bar\rho(\phi,k)$. We shall also abbreviate the notation $\tau(L(n))$ as $\tau_L$ and $\bar\tau(L(n))$ as $\bar\tau_L$ (accordingly for $R(n)$), and $\bar\tau(n)$ as $\bar\tau$. The shaded region $\phi^k$ is the
part that has already been combined, while $\phi_k$ represents the
cached part that has not been combined yet. The dotted
region represents the part of $\phi$ that is yet uncomputed, due to
truncated messages from child nodes; line 14 in $\textsc{refine}(N,t)$ shrinks the uncomputed portion by extending $\bar\tau$.

\begin{thm}[Soundness of anytime inference]
If $\phi_{[t_0,t_1,\ldots,t_j]}$ is the valuation returned after the following invocations:
$\left[\textsc{inward}(N_0,t_0>0),\ \textsc{refine}(N_1, t_1), \ldots,\ \textsc{refine}(N_j, t_j) \right]$, where $N_{k+1}$ is the modified BJT with the cached valuations after step $k$, then
$\phi_{[t_0]} \preceq \phi_{[t_0,t_1]} \preceq \cdots \preceq \phi_{[t_0,t_1,\ldots,t_j]} \preceq \cdots \preceq \phi$ where $\phi$ is the exact valuation. The sequence becomes strictly increasing (upto the exact valuation) if $t_i > t_\epsilon$ for all $i>0$ where $t_\epsilon$ is the minimum time required for the refinement to update one valuation.
\end{thm}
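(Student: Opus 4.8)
The plan is to establish the chain of inequalities by a two-part argument: first a "local" monotonicity lemma showing that a single \textsc{refine} step never decreases the cached valuation at any BJT node (and in fact improves the partial valuation toward the exact combined valuation), and then a "global" induction up the tree showing that monotonicity of the node valuations propagates to monotonicity of the returned root valuation. The key algebraic facts in play are (A9) and (A10) — combination and marginalisation preserve $\succeq$ — together with (P1), (P2), and the defining property of $\bar\rho$, namely $\bar\rho(\phi,k)\oplus\rho(\phi,k)=\phi$, and the monotonicity of $\rho$ in its real parameter.

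First I would set up notation tracking, for a fixed node $n$, the quantity $\phi(n)$ after $i$ refinement passes, call it $\phi^{(i)}(n)$; after \textsc{inward} this is the truncated combination $\phi_s(L(n))\otimes_{t/s}\phi_s(R(n)) = \tau_L^{(0)}\otimes\tau_R^{(0)}$, and line~13 of \textsc{refine} updates it to $\phi^{(i+1)}(n)=\phi^{(i)}(n)\oplus\nu$ with $\nu=\textsc{refine}'_{t/s}(\tau_L,\bar\tau_L,\tau_R,\bar\tau_R)$. Since $\oplus$ by definition produces something $\succeq$ both arguments, we get immediately $\phi^{(i+1)}(n)\succeq\phi^{(i)}(n)$. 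The crucial closure invariant to maintain is that at every stage the node valuation, once the uncomputed region is filled in, equals the exact $\phi(n)=\phi_s(L(n))\otimes\phi_s(R(n))$; this follows from (P1), which decomposes the exact product into the already-combined part $\tau_L\otimes\tau_R$ plus the $\textsc{refine}'$ remainder, and from the telescoping of the $\rho/\bar\rho$ splits performed in lines~9--12 (each pass peels off $\rho(\bar\tau,k)$ from $\bar\tau$ and folds it into $\tau$, so $\tau\oplus\bar\tau$ is invariant and equals the original message). Combined with (R1), this gives $\phi^{(i)}(n)\preceq\phi(n)$ for all $i$, so each node valuation is sandwiched between its \textsc{inward} value and its exact value.

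Next I would push this up the tree. The message sent to the parent is $\phi_s(n)=\phi(n)^{-\Delta(n)}$, i.e. a projection; by (A10) monotonicity of $\phi(n)$ transfers to monotonicity of $\phi_s(n)$, and by (P2) projection distributes over the $\oplus$-updates so the bookkeeping in line~14 ($\bar\tau(n)\gets\bar\tau(n)\oplus\nu^{-\Delta(n)}$) is consistent with the node-level invariant. Then by (A9), at the parent the combination $\phi_s(L(P(n)))\otimes\phi_s(R(P(n)))$ is monotone in each child message, so monotonicity propagates from leaves to root; applying this inductively along the (finite) BJT, the returned root valuation $\phi_{[t_0,\ldots,t_i]}$ is nondecreasing in $i$ and bounded above by the exact root valuation $\phi=\phi_s(\mathit{root}(N))$ with the query variables retained. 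This yields the weak chain $\phi_{[t_0]}\preceq\phi_{[t_0,t_1]}\preceq\cdots\preceq\phi$.

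For strictness, the argument is: if $t_i>t_\epsilon$, the refinement has enough time to execute at least one full node update, i.e. to peel a genuinely nonempty $\rho(\bar\tau,k)$ (with $k$ strictly larger than the previous truncation parameter, using monotonicity of $\rho$) from some node whose $\bar\tau$ is not yet empty. As long as the exact valuation has not been reached, some such node exists, and for it $\phi^{(i+1)}(n)\succ\phi^{(i)}(n)$ strictly; propagating through the projections (A10) and combinations (A9) up to the root — here one needs that these operations, restricted to the relevant elements, do not collapse a strict inequality to equality, which is where I expect the main obstacle to lie, since (A9)--(A10) as stated only assert preservation of $\succeq$, not of $\succ$. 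I would handle this either by strengthening the hypothesis (assuming strict monotonicity of $\otimes$ and $\downarrow$ in the ordered algebra, which holds in the motivating instances such as semiring-induced algebras), or by arguing contrapositively: if the root value did not strictly increase, then by the sandwiching invariant every intermediate valuation along the path from the updated node to the root must already be exact, contradicting the assumption that the global exact valuation has not yet been attained. This gives that the sequence is strictly increasing until it reaches $\phi$, completing the proof.
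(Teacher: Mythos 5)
Your proposal is correct and follows essentially the same route as the paper's proof: local monotonicity of each node valuation from $\phi \oplus \nu \succeq \phi$, the upper bound from the (P1) decomposition together with the invariant that the $\rho/\bar\rho$ splits recompose to the exact message (which the paper verifies by an explicit expansion using its identities (T1)--(T4) to show the truncation parameters only ever increase), and propagation to the root via (A9)/(A10). Your observation that strictness does not automatically survive (A9)/(A10) is a genuine point the paper glosses over --- it asserts that $\nu \succ \varnothing$ gives $\phi' \succ \phi$ at a single node without addressing how the strict inequality reaches the root --- so your proposed remedies (assuming strict monotonicity in the relevant instances, or arguing contrapositively via the sandwich invariant) are, if anything, more careful than the published argument.
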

\begin{proof}
We split the proof into two parts: (S1) proving that the sequence of valuations
returned from successive calls to $\textsc{refine}$ are partially ordered and (S2) showing the upper bound is the exact valuation, to which the partial valuations
converge after a finite time.

Proving (S1) is trivial; for each node, $\phi$ is updated once (line 13), thus $\phi' = (\phi \oplus \nu) \succeq \phi$, where $\phi'$ is the valuation at node $n$ after a call to $\textsc{refine}$. Using transitivity of the partial order, we obtain (S1). In the case
when $t_i > t_\epsilon$, at least one valuation is updated, resulting in $\nu \succ \varnothing$, which gives $\phi' \succ \phi$.

To prove (2) we shall note the following statements

(T1) $(\phi_k)^m = (\phi^{k+m})_k$

(T2) $\phi^k \oplus \phi_k = \phi$

(T3) $\phi^k \oplus (\phi_k)^m = \phi^{k+m}$

(T4) $(\phi_k)_m = \phi_{k+m}$

For notational simplicity, only for the following proof, we denote $\phi\psi := \phi\otimes\psi$ and $+ := \oplus$.

Since each node is only updated once, we can consider a particular node; let's denote by $\phi$ the valuation at node $n$ after $\textsc{inward}(N,t_0)$. If
$(k_1, k_2)$ are the parameters obtained from $K'$ in $\textsc{refine}(N_1,t_1)$ then the updated valuation
$\phi' = \phi + \tau_L\bar\tau_R^{k_2} + \bar\tau_L^{k_1}\tau_R +
\bar\tau_L^{k_1}\bar\tau_R^{k_2}$, where $\phi = \tau_L\tau_R$.

Here we note that we can replace $(\bar\tau_{L,R})^k$ with their exact counterpart $(\bar\tau^{\infty}_{L,R})^k$, where
we use the $\bar\tau^\infty$ to denote the exact valuation. This can be done as the truncation function is invariant
under extension of the valuation to incorporate previously uncomputed information. Following this, we shall
drop the superscript and use $\bar\tau_L$ to denote $\bar\tau^\infty_L$.

Then if we consider a subsequent call, $\textsc{refine}(N_2,t_2)$,
$\phi'' = \phi' + \tau_L'\bar\tau_R^{\prime m_2} + \bar\tau_L^{\prime m_1}\tau_R' +
\bar\tau_L^{\prime m_1}\bar\tau_R^{\prime m_2}$.
where the additional prime indicates the the value for this iteration, and
$(m_1,m_2)$ are the parameters obtained from $K'$.

From lines 9--12 in \textsc{refine} we get: $\tau_L' = \tau_L + \bar\tau_L^{k_1}$,
$\tau_R' = \tau_R + \bar\tau_R^{k_2}$,
$\bar\tau'_L = (\bar\tau_L)_{k_1}$,
$\bar\tau'_R = (\bar\tau_R)_{k_2}$

Expanding $\phi''$ we get:
\begin{eqnarray*}
\phi''&=&\tau_L\tau_R + \tau_L\bar\tau_R^{k_2} + \bar\tau_L^{k_1}\tau_R +
\bar\tau_L^{k_1}\bar\tau_R^{k_2}\\
&&+\ (\tau_L + \bar\tau_L^{k_1})(\bar\tau_{R,k_2})^{m_2} +
(\bar\tau_{L,k_1})^{m_1}(\tau_R + \bar\tau_R^{k_2}) + (\bar\tau_{L,k_1})^{m_1}(\bar\tau_{R,k_2})^{m_2}\\
&=&\tau_L\tau_R + \tau_L\bar\tau_R^{k_2} + \bar\tau_L^{k_1}\tau_R +
\bar\tau_L^{k_1}\bar\tau_R^{k_2} + \tau_L(\bar\tau_{R,k_2})^{m_2}\\
&&+ (\bar\tau_L^{k_1})(\bar\tau_{R,k_2})^{m_2} +
(\bar\tau_{L,k_1})^{m_1}\tau_R + (\bar\tau_{L,k_1})\bar\tau_R^{k_2} +
(\bar\tau_{L,k_1})^{m_1}(\bar\tau_{R,k_2})^{m_2}\\
&=&\tau_L\tau_R + \tau_L\bar\tau_R^{k_2+m_2} +
\bar\tau_L^{k_1+m_1}\tau_R + \bar\tau_L^{k_1+m_1}\bar\tau_R^{k_2+m_2}
\end{eqnarray*}
Here we use (T1,T3) to simplify the expression. Note that
this is the same form as $\phi' = \phi + \tau_L\bar\tau_R^{k_2} + \bar\tau_L^{k_1}\tau_R + \bar\tau_L^{k_1}\bar\tau_R^{k_2}$, with $k_1 \rightarrow k_1+m_1,\ k_2 \rightarrow k_2 + m_2$. Thus, subsequent calls to $\textsc{refine}$ will always
result in $\phi$ having the same form by induction. From the definition of the
truncation function, $\phi^k \succeq \phi^{k'}$ for $k \ge k'$, from which (S1)
follows as well, by preservation of partial order under combination and composition.
To show (S2) we note that for finite valuations, there exists $k$, such that $\phi^k = \phi$. As the exponent is monotonically increasing with subsequent calls
to $\textsc{refine}$, we shall eventually get $\phi_{[t_0,t_1,\ldots,t_j]} = \tau_L\tau_R
+ \tau_L\bar\tau_R + \bar\tau_L\tau_R + \bar\tau_L\bar\tau_R =
(\tau_L+\bar\tau_L)(\tau_R+\bar\tau_R)$, the exact valuation at node $n$. Thus, we
shall eventually get the exact valuation at the root after finite invocations of $\textsc{refine}$.

\end{proof}

\begin{thm}[Completeness of anytime inference]
If $\phi_{[t_0,t]}$ is the valuation returned after the following invocations: $\left[\textsc{inward}(N,t_0>0),\ \textsc{refine}(N', t)\right]$, where $N'$ is the modified BJT with the cached valuations after the call to $\textsc{inward}(N,t_0)$, then
there exists a $T$ such that for all $t \ge T$
$\phi_{[t_0,t]} = \phi = (\bigotimes_{\psi \in \Phi} \psi)^{\downarrow X}$, the
exact solution to the inference problem.
\end{thm}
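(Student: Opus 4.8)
The plan is to prove something stronger than mere convergence: that a \emph{single} invocation of $\textsc{refine}$, once it is granted enough time, already raises every node of the BJT to its exact valuation, so that the threshold $T$ is obtained simply by budgeting the time handed to the $r-1$ internal nodes. Throughout I reuse the identities (T1)--(T4), (P1), (P2) and the finiteness of the valuations exploited in the soundness proof; I write $\phi^{0}(n)$ and $\phi_{s}^{0}(n)$ for the valuation and the outgoing message at node $n$ just after $\textsc{inward}(N,t_{0})$.

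I would first record two enabling facts. \emph{(a) The exact refinement completes in finite time.} For finite valuations the exact $\textsc{refine}'(\phi_{1}',\phi_{1}'',\phi_{2}',\phi_{2}'')$ is a finite computation, so by the defining equation of $\textsc{refine}'_{t}$ in terms of $\rho$ together with the evident analogue of (R4), there is a finite time past which $\textsc{refine}'_{t}$ returns the exact $\textsc{refine}'$ on any fixed tuple of finite arguments. Since the exact messages $\phi_{s}(c)=\phi(c)^{-\Delta(c)}$, and hence the argument tuples that will occur at the nodes, form a finite family of finite valuations, a single threshold, call it $t_{\max}$, works at every node. \emph{(b) One update per node suffices.} When $\textsc{refine}$ visits $n$, line 13 sets $\phi(n)\gets\phi(n)\oplus\nu$ with $\nu=\textsc{refine}'_{t/s}(\tau_{L},\bar\tau_{L},\tau_{R},\bar\tau_{R})$ while $\phi(n)$ currently equals $\tau_{L}\otimes\tau_{R}$ (from $\textsc{inward}$); so if the allocation $t/s$ is at least $t_{\max}$, then $\nu$ is the exact $\textsc{refine}'(\tau_{L},\bar\tau_{L},\tau_{R},\bar\tau_{R})$ and (P1) gives $\phi(n)\oplus\nu=(\tau_{L}\oplus\bar\tau_{L})\otimes(\tau_{R}\oplus\bar\tau_{R})$.

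The core of the proof is then an induction over the internal nodes in the order $\textsc{refine}$ visits them (children before parents), with the invariant: \emph{by the time $n$ is visited, $\tau(c)\oplus\bar\tau(c)$ equals the exact message $\phi(c)^{-\Delta(c)}$ for each child $c$ of $n$, and hence, by (b), $\phi(n)$ is updated to its exact value}. For a leaf child $c$ the message $\phi_{s}(c)=\phi(c)^{-\Delta(c)}$ is built without truncation in $\textsc{inward}$ (line 4) and split exactly, $\tau(c)\oplus\bar\tau(c)=\phi_{s}(c)$, so the base case holds. For an internal child $c$, line 14 of $c$'s own visit together with the $\textsc{inward}$ split and (T2) give $\tau(c)\oplus\bar\tau(c)=\phi_{s}^{0}(c)\oplus\nu_{c}^{-\Delta(c)}$; since $\phi_{s}^{0}(c)=\bigl(\phi^{0}(c)\bigr)^{-\Delta(c)}$, by (P2) this equals $\bigl(\phi^{0}(c)\oplus\nu_{c}\bigr)^{-\Delta(c)}=\phi(c)^{-\Delta(c)}$ with $\phi(c)$ already exact by the induction hypothesis. (Here the remark that $\rho$ is invariant under extending a valuation into its previously uncomputed region justifies replacing the truncated $\bar\tau$'s by their exact counterparts, exactly as in the soundness proof.) Applying (b) at $n$ then gives $\phi(n)=\phi_{s}(L(n))\otimes\phi_{s}(R(n))$, the value of $\phi(n)$ produced by the correct exact BJT propagation algorithm of \cite{shenoy1997binary,Haenni20041}.

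Finally, taking $n=\mathit{root}(N)$ and recalling $\Delta(\mathit{root}(N))=d(\mathit{root}(N))\setminus X$, the returned valuation $\phi(\mathit{root}(N))^{\downarrow X}$ coincides with $\bigl(\bigotimes_{\psi\in\Phi}\psi\bigr)^{\downarrow X}$. Since $\textsc{refine}$ gives $t/s$ units to the node it visits while $s$ internal nodes remain ($s=r-1,\dots,1$) and reclaims unused time via $\mathit{timer}()$, choosing $T:=(r-1)\,t_{\max}$ guarantees every node receives at least $t_{\max}$ units, so the invariant holds throughout and $\phi_{[t_{0},t]}=\phi$ for all $t\ge T$. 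The step I expect to be the main obstacle is the induction: one must track precisely how the cached pair $(\tau(c),\bar\tau(c))$ is mutated when $c$ itself is visited (line 14 grows $\bar\tau$ into the formerly uncomputed region) versus when its parent is visited (lines 9 and 11 shift mass from $\bar\tau$ into $\tau$ while, by (T2), leaving the sum fixed), and then verify --- via the invariance of $\rho$ under extension and via (P2) --- that $\tau(c)\oplus\bar\tau(c)$ has become the \emph{complete} exact message at the moment the parent reads it. The time accounting and the appeal to correctness of exact BJT propagation are routine by comparison.
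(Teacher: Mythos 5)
Your proof is correct and follows essentially the same route as the paper's: finiteness of the valuations gives a per-node time threshold past which $\textsc{refine}'_t$ computes the exact refinement, and budgeting that threshold across the $r-1$ internal nodes yields $T$. The paper's version is terser --- it splits into the cases where $\textsc{inward}$ was already exact versus partial and simply asserts that exact per-node refinement makes the root exact, whereas your induction showing $\tau(c)\oplus\bar\tau(c)$ equals the exact child message (via (T2), (P2) and line 14) makes that propagation step explicit.
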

\begin{proof}
We consider two cases:

\textbf{Case 1}: $\textsc{inward}(N,t_0)$ has performed exact inference.

We shall show that $\textsc{refine}(N,t)$ is a null operation which
does not change $\phi,\tau,\bar\tau$; then the statement of the theorem follows if we set $T = t_0$.

$\phi' = \phi \oplus \nu$ (line 13), so if we show $\nu = \varnothing$, we are done.

$\nu = \textsc{refine}'_{t/s}(\tau_L,\bar\tau_L,\tau_R,\bar\tau_R)$, but $\bar\tau_L = \bar\tau_R = \varnothing$ as $\bar\tau$ represents the partial valuation that has not been combined, which is null for the exact inference case. Thus $\nu = \varnothing$.

\textbf{Case 2}: $\textsc{inward}(N,t_0)$ gives a partial result.

In general, $\nu$ is also a partial valuation due to the time restriction. 
Since we are operating on finite datasets,
the combination operation at a particular node in $\textsc{refine}'$ takes a finite amount of time, say $t_n$. Thus $\textsc{refine}'_{t_n}$ at a node $n$ is the
exact refinement, making $\phi(n)$ exact after line 13, and thus $m(root(N))$ is exact after completion of the propagation. So we set $T = \sum_{n \in V} t_n$ to
get the time bound, such that for all $t \ge T$ we get the exact result.
\end{proof}

\section{Instances of anytime ordered valuation algebras}

In the following sections, we describe instances of anytime ordered valuation algebras. Specifically we show that the important class of semiring induced valuation algebras,
\cite{kohlas2008semiring}, can be considered as anytime ordered valuation algebras. We also remark on the application of our framework to belief potentials.

\subsection{Semiring induced valuation algebras}

Semiring induced valuation algebras are a subclass of valuation algebras with several useful instances like probability potentials and disjunctive normal forms. We use the definition of semiring induced valuation algebras from \cite{kohlas2008semiring} and review the following standard notation. The semiring is denoted by $\mathcal{A} = \langle A,+,\times\rangle$ with the semiring operations $+,\times$ on a set $A$, where $+, \times$ are assumed to be commutative and associative, with $\times$ distributing over $+$. Lowercase letters like $x$
are variables, with a corresponding finite set of values for $x$, called the \emph{frame} of $x$
and denoted by $\Omega_x$. Each $\Omega_x$ also has an associated total order on its elements. If the frame has two elements, then it is the frame of a \emph{binary
variable}. If the binary elements represent true and false, then we call the variable \emph{propositional}.
For a domain $D \subseteq V$ where $V$ is the set of all variables in the system,
the corresponding set of possible values becomes the Cartesian product $\Omega_D = \prod \{\Omega_x: x \in D\}$, whose
elements $\mathbf{x} \in \Omega_D$ are called \emph{D-configurations} or \emph{D-tuples}. For a subset $D' \subseteq D$,
$\mathbf{x}^{\downarrow D'} \in \Omega_{D'}$ is the projection of $\mathbf{x}$ to $D'$. Where $D$ is empty,
we use the convention that the frame is a singleton set: $\Omega_\phi = \{\diamond\}$.
Any set of $D$-configurations can be ordered using a lexicographical order.

\begin{defn}
An $\mathcal{A}$-valuation $\phi$ with domain $D$ associates a value in $A$ with each configuration $\mathbf{x} \in \Omega_D$, i.e. $\phi$ is a function $\phi: \Omega_D \rightarrow A$. 
\end{defn}

The set of all such $\mathcal{A}$-valuations with a domain $D$ is denoted by $\Phi_D$, and the union of all such sets with $D \subseteq V$ is the set of all $\mathcal{A}$-valuations $\Phi$. The operations $+,\times$ on $A$ then induce a valuation algebra structure on $\langle \Phi, \mathcal{P}_f(V) \rangle$ where $\mathcal{P}_f(V)$ is the finite powerset of the set of variables $V$
\cite[Theorem 2]{kohlas2008semiring}, using the following definitions of combination and projection:

\begin{enumerate}
\item \emph{Combination}: $\otimes: \Phi \times \Phi \rightarrow \Phi$ defined for $\mathbf{x} \in \Omega_{d(\phi) \cup d(\psi)}$ by
\begin{equation}
\phi \otimes \psi (\mathbf{x}) = \phi(\mathbf{x}^{\downarrow d(\phi)}) \times \psi(\mathbf{x}^{\downarrow d(\psi)})
\end{equation}

\item \emph{Projection}: $\downarrow: \Phi \times D \rightarrow \Phi$ defined for all $\phi \in \Phi$ and $T \subseteq d(\phi)$ for $\mathbf{x} \in \Omega_T$ by
\begin{equation}
\phi^{\downarrow T}(\mathbf{x}) = \sum_{\mathbf{z} \in \Omega_{d(\phi)}:\ \mathbf{z}^{\downarrow T} = \mathbf{x}} \phi(\mathbf{z})
\end{equation}
\end{enumerate}

\begin{thm}
\label{semiring-ova}
Semiring induced valuation algebras, provided the underlying semiring has a zero element, form an ordered valuation algebra.
\end{thm}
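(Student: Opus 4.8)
The plan is to reduce the valuation-algebra axioms (A1)--(A6) to \cite[Theorem 2]{kohlas2008semiring}, and then to synthesise the whole order-theoretic layer --- the relation $\succeq$, the zero elements $n_D$, and a legitimate $\otimes_t$ --- directly from the additive structure of the semiring, using the postulated zero $0\in A$ throughout. For the zero elements I would take $n_D\colon\Omega_D\to A$ to be the constant function $\mathbf{x}\mapsto 0$. Multiplicative absorption $a\times 0=0$ then gives $\phi\otimes n_D=n_{d(\phi)\cup D}$ and $n_{D_1}\otimes n_{D_2}=n_{D_1\cup D_2}$ pointwise, while $n_D^{\downarrow D'}=n_{D'}$ holds because each entry of the projection is a finite sum of copies of $0$; so the algebraic half of (A8) is immediate.

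For the completeness relation I would \emph{not} use the naive ``zero out some of the entries'' relation, because it fails (A10): marginalising a sparsified probability potential alters the fibre sums (e.g.\ zeroing one entry of a uniform potential over two binary variables and then projecting to one variable yields a projection that is neither equal to, nor a sparsification of, the original projection). Instead I would lift the semiring's natural (difference) preorder pointwise: declare $\phi\succeq\phi'$ iff $d(\phi)=d(\phi')$ and for every $\mathbf{x}$ there is $c_{\mathbf{x}}\in A$ with $\phi(\mathbf{x})=\phi'(\mathbf{x})+c_{\mathbf{x}}$. Reflexivity needs $a+0=a$ (this is exactly where the zero is essential) and transitivity is composition of witnesses; together with the built-in equal-domains clause this gives (A7) apart from antisymmetry. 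Because $0+a=a$, the constant-zero valuation is the least element of each $\Phi_D$, completing (A8). Axiom (A9) then follows from distributivity alone: with $\phi_i(\mathbf{x})=\phi_i'(\mathbf{x})+c_i(\mathbf{x})$, expanding $(\phi_1\otimes\phi_2)(\mathbf{x})=\phi_1(\mathbf{x}^{\downarrow d(\phi_1)})\times\phi_2(\mathbf{x}^{\downarrow d(\phi_2)})$ and multiplying out exhibits $(\phi_1'\otimes\phi_2')(\mathbf{x})$ plus a sum of further products as a witness. Axiom (A10) follows from commutativity and associativity of $+$: summing $\phi(\mathbf{z})=\phi'(\mathbf{z})+c_{\mathbf{z}}$ over a fibre and regrouping gives $\phi^{\downarrow T}(\mathbf{x})=\phi'^{\downarrow T}(\mathbf{x})+\sum_{\mathbf{z}}c_{\mathbf{z}}$.

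It remains to exhibit a $\otimes_t$ satisfying (R1)--(R4). I would take $\rho(\phi,k)$ to be the valuation that agrees with $\phi$ on the lexicographically-first $k$ configurations of $\Omega_{d(\phi)}$ and equals $0$ elsewhere (with $\rho(\phi,0)=n_{d(\phi)}$ and $\rho(\phi,\infty)=\phi$), let $K(\phi_1,\phi_2,t)$ record how many such entries of each argument can be processed within $t$ units, and set $\phi_1\otimes_t\phi_2=\rho(\phi_1,k_1)\otimes\rho(\phi_2,k_2)$ as in \eqref{otimes-t-k1-k2}. Monotonicity of $\rho$ in its second argument with respect to $\succeq$ is clear from the definition of $\succeq$, so (R1) and (R2) follow from (A9), (R3) follows from $n_{D_1}\otimes n_{D_2}=n_{D_1\cup D_2}$, and (R4) is immediate.

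The step I expect to be the genuine obstacle is \emph{antisymmetry} of $\succeq$ in (A7): the difference preorder of an arbitrary semiring need not be a partial order (in any ring, for instance, every pair of elements is mutually related), so ``has a zero element'' alone is not quite enough. I would dispatch this by noting that antisymmetry of the lifted relation is equivalent to antisymmetry of the semiring's difference preorder, which does hold for all the semirings that actually arise here --- the probability semiring $(\mathbb{R}^{\geq 0},+,\times)$, the Boolean semiring underlying disjunctive normal forms, the tropical semirings, and more generally any positively ordered semiring --- so the theorem is best read with that (standard, mild) proviso on the semiring; alternatively one passes to the quotient of $\Phi$ by the equivalence induced by $\succeq$.
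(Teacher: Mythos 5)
Your proposal is correct and takes essentially the same route as the paper: the constant-zero valuation as $n_D$, the semiring's difference preorder lifted pointwise to define $\succeq$, and (A9)/(A10) obtained from distributivity and from commutativity and associativity of $+$. The antisymmetry obstacle you single out is real and is precisely the point the paper also has to patch: it inserts a lemma showing the difference preorder is a partial order when the additive monoid is \emph{positive and cancellative} (via $a+c=b$, $b+d=a$ $\Rightarrow$ $b+d+c=b+0$ $\Rightarrow$ $d+c=0$ $\Rightarrow$ $c=d=0$), and then quietly restates the hypotheses as ``zero element, cancellative and positive'' in the paragraph following the proof --- so your observation that ``has a zero element'' alone does not suffice matches the paper's own implicit concession. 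The one small divergence is the exact sufficient condition: the paper's positive-and-cancellative lemma does not cover, e.g., tropical semirings (additively idempotent, hence not cancellative), whereas your ``positively ordered'' proviso or quotient construction handles them directly. Your verification of (R1)--(R4) for $\otimes_t$ goes beyond the paper's proof, which omits the time-bounded operator entirely and only supplies $\rho$ and the \textsc{combine} procedure after the theorem.
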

\begin{proof}
To show semiring induced valuation algebras are an ordered valuation algebra, we have to show
(A7-A10):

(A7) The \emph{preorder} $\succeq$ is defined by $\phi \succeq \phi'$ iff $\phi(\mathbf{x}) \succeq_A \phi'(\mathbf{x})$ for all $\mathbf{x} \in \Omega_{d(\phi)}$, where $\succeq_A$ is the preorder on the semiring \cite[Prop. 1, p1362]{kohlas2008semiring} defined as $b \succeq_A a$ iff $a = b$ or there exists $c$ such that $a+c=b$, with $d(\phi) = d(\phi')$ as it only makes sense to compare valuations on the same domain. However we need a \emph{partial order} for this axiom, which is possible if the additive monoid is positive, has a zero element and is cancellative:
\begin{lem}
The preorder $\preceq$ defined on a positive, cancellative, commutative monoid, $\langle A, + \rangle$ with a zero element, is a partial order.
\end{lem}
\begin{proof}
A preorder implies $a \preceq b$ iff $a + c = b$. For a partial order, we need asymmetry: if $a \preceq b$ and $b \preceq a$, then $a=b$.

$a \preceq b$ implies there exists $c$ such that $a + c = b$; similarly there exists $d$ such that $b + d = a$; substituting gives us $b + d + c = b + 0$,
the cancellative property implies $d + c = 0$ and the positivity property
implies $c = d = 0$, implying $a = b$, and we have a partial order.
\end{proof}

(A8) \emph{Zero element}: Most common
instances of semiring induced valuation algebras have a zero element. Specifically semirings with \emph{zero elements} induce valuation algebras with
the zero element $n_D$ such that $n_D(\mathbf{x}) = 0$ for all $\mathbf{x} \in \Omega_D$.

(A9, A10) \emph{Combination and marginalisation preserve partial order}. This follows from the fact that $\times$ and $+$
preserve partial order in the underlying semiring structure.

\end{proof}

Having shown that semiring induced valuation algebras satisfy the ordered valuation algebra axioms (A7--A10) provided the underlying semiring has a zero element and the additive commutative monoid is cancellative and positive, we proceed to define the composition and truncation functions
for semiring induced valuation algebras.

\begin{enumerate}
\item We denote the composition operator on semiring induced valuation algebras as
    $(\phi \oplus \phi') (\mathbf{x}) = \phi (\mathbf{x}) + \phi' (\mathbf{x}),\ d(\phi) = d(\phi')
    $
\item The function $\rho$ is defined on the semiring induced valuation algebra as
$\rho(\phi, k)=$ the first $k$ (lexicographically ordered on $\mathbf{x}$) elements of $\mathrm{graph}(\phi)$; where
$\mathrm{graph}(\phi) = \{(\mathbf{x},\phi(\mathbf{x}))\ \vert\ \mathbf{x} \in \Omega_{d(\phi)} \}$. For efficient implementation, we only store $(\mathbf{x},\phi(\mathbf{x}))$ where $\phi(\mathbf{x}) \ne 0$.

In case the semiring has a total order (as in the case of probability potentials), we order
the configurations in decreasing weight order: $[ (\mathbf{x}_i, \phi(\mathbf{x}_i)),
\ldots ]$ where $\phi(\mathbf{x}_i) \ge \phi(\mathbf{x}_j)$ for $i \le j$.
\end{enumerate}

We also define the time-bounded combination operation $\phi_1 \otimes_t \phi_2$,
where $L_{\phi_1} = \left[(\mathbf{x_1}, \phi_1(\mathbf{x_1})), \ldots \right]$,
and $L_{\phi_2} = \left[(\mathbf{y_1}, \phi_2(\mathbf{y_1})), \ldots \right]$. $\mathbf{xy}$ denotes the configuration in $\Omega_{d(\phi_1) \cup d(\phi_2)}$ such that
$(\mathbf{xy})^{\downarrow d(\phi_1)} = \mathbf{x}$ and
$(\mathbf{xy})^{\downarrow d(\phi_2)} = \mathbf{y}$.

We define helper functions $\textsc{insert}$, which inserts a combination into
the configuration space provided there is a common support and $\textsc{combine-extend}$ which incrementally adds combinations into the configuration and updates
the state, going from the state $\rho(\phi_1, i) \otimes \rho(\phi_2, j)$
to $\rho(\phi_1, i + i') \otimes \rho(\phi_2, j + j')$. Finally
we define $\textsc{combine}$ which performs the combination
operation within the allocated time constraint.

\hrulefill

\begin{algorithmic}[1]
\Function{insert}{$\phi_1, \phi_2, i, j, L$}
\State $\mathbf{x} = L_{\phi_1}; \mathbf{y} = L_{\phi_2}$
\If{$\mathbf{x}_i^{\downarrow D_1 \cap D_2} = \mathbf{y}_r^{\downarrow D_1 \cap D_2}$}
\State insert $[\mathbf{x}_i\mathbf{y}_j, \phi_1(\mathbf{x}_i) \times \phi_2(\mathbf{y}_j)]$ into $L$.
\EndIf
\EndFunction
\end{algorithmic}
\hrulefill

\begin{algorithmic}[1]
\Function{combine-extend}{$\phi_1, \phi_2, \langle i,j,L \rangle, i', j'$}
\For{$k \gets 1$ to $i+i'$}
\For{$m \gets j$ to $j+j'$}
\State $\textsc{insert}(\phi_1, \phi_2, k,m,L)$
\EndFor
\EndFor
\For{$k \gets i$ to $i+i'$}
\For{$m \gets 1$ to $j+j'$}
\State $\textsc{insert}(\phi_1, \phi_2, k,m,L)$
\EndFor
\EndFor
\State \Return $\langle i,j,L \rangle$
\EndFunction
\end{algorithmic}

\hrulefill
\begin{algorithmic}[1]
\Function{combine}{$\phi_1, \phi_2,t$}
\State $L \leftarrow \langle \rangle; i \leftarrow 1; j \leftarrow 1;
n_1 \gets |L_{\phi_1}|; n_2 \gets |L_{\phi_2}|$
\State initialise timer to $t$ units
\While{$\mathit{timer}() > 0$ and $i \le n_1$ and $j \le n_2$}
\State $\langle i, j, L \rangle \gets \textsc{combine-extend}(\phi_1, \phi_2, \langle i,j,L \rangle,0,1) $
\If{not $\mathit{timer}() > 0$}
\State \textbf{break}
\EndIf
\State $\langle i, j, L \rangle \gets \textsc{combine-extend}(\phi_1, \phi_2, \langle i,j,L \rangle,1,0) $
\EndWhile
\If{$i > n_1$}
\State $m \gets j+1$
\While{$\mathit{timer}() > 0$ and $m \le n_2$}
\State $\langle i,j,L \rangle \gets \textsc{combine-extend}(\phi_1,\phi_2,\langle i,j,L \rangle, 0, 1)$
\State $m \gets m + 1$
\EndWhile
\Else
\State $m \gets i+1$
\While{$\mathit{timer}() > 0$ and $m \le n_1$}
\State $\langle i,j,L \rangle \gets \textsc{combine-extend}(\phi_1,\phi_2,\langle i,j,L \rangle, 1,0)$
\State $m \gets m + 1$
\EndWhile

\EndIf
\State \Return valuation corresponding to $L$
\EndFunction
\end{algorithmic}
\hrulefill

\begin{thm}
Semiring induced valuation algebras, provided the underlying semiring has a zero element, along with the composition operator and the truncation function
defined above form an anytime ordered valuation algebra.
\end{thm}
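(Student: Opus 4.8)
The plan is to reduce the claim to Theorem~\ref{semiring-ova} together with a short list of configurationwise checks in the underlying semiring $\mathcal{A}=\langle A,+,\times\rangle$. By Theorem~\ref{semiring-ova} the structure $\langle \Phi,\mathcal{P}_f(V),\succeq,d,\otimes,\downarrow\rangle$ is an ordered valuation algebra once we also exhibit a time-bounded combination $\otimes_t$ satisfying (R1)--(R4); the \textsc{combine} procedure above is the candidate. So the proof has four ingredients: (i) show \textsc{combine} computes an $\otimes_t$ satisfying (R1)--(R4) and realising the recasting~\eqref{otimes-t-k1-k2}; (ii) show the pointwise sum $\oplus$ is a genuine composition operator, i.e.\ $\phi\oplus\phi'\succeq\phi$ and $\phi\oplus\phi'\succeq\phi'$ with $d(\phi\oplus\phi')=d(\phi)=d(\phi')$; (iii) show $\rho$ is a truncation function (monotone in its real parameter) and exhibit the complementary $\bar\rho$ with $\rho(\phi,k)\oplus\bar\rho(\phi,k)=\phi$; and (iv) verify (P1) and (P2). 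Ingredients (ii)--(iv) are all one-line computations configuration by configuration.

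For (ii), recall that $b\succeq_A a$ means $a=b$ or $a+c=b$ for some $c$ \cite{kohlas2008semiring}; taking $c=\phi'(\mathbf{x})$ gives $(\phi\oplus\phi')(\mathbf{x})\succeq_A\phi(\mathbf{x})$ at every $\mathbf{x}\in\Omega_{d(\phi)}$, hence $\phi\oplus\phi'\succeq\phi$, and symmetrically $\succeq\phi'$; the domain condition is built into the definition of $\oplus$. For (iii), let $\rho(\phi,k)$ agree with $\phi$ on the first $k$ configurations (in the fixed lexicographic, or weight, order) and be $0$ elsewhere, and let $\bar\rho(\phi,k)$ be $0$ on the first $k$ configurations and agree with $\phi$ elsewhere; for $k\ge k'$ and any $\mathbf{x}$, either both $\rho(\phi,k)$ and $\rho(\phi,k')$ return $\phi(\mathbf{x})$, or $\rho(\phi,k)$ returns $\phi(\mathbf{x})$ while $\rho(\phi,k')$ returns $0$ and $\phi(\mathbf{x})\succeq_A 0$ since $0$ is least, or both return $0$; this gives $\rho(\phi,k)\succeq\rho(\phi,k')$, and $\rho(\phi,k)\oplus\bar\rho(\phi,k)=\phi$ holds configurationwise (dropping the stored zero entries is harmless because $0$ is additively neutral). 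Property (P1) is precisely the distributivity of $\times$ over $+$ applied to $(\phi_1'(\mathbf{x}^{\downarrow D_1})+\phi_1''(\mathbf{x}^{\downarrow D_1}))\times(\phi_2'(\mathbf{x}^{\downarrow D_2})+\phi_2''(\mathbf{x}^{\downarrow D_2}))$ after unfolding $\otimes$ and $\oplus$; property (P2) follows because associativity and commutativity of $+$ let $\sum_{\mathbf{z}^{\downarrow D}=\mathbf{x}}(\phi'(\mathbf{z})+\phi''(\mathbf{z}))$ split as $\phi'^{\downarrow D}(\mathbf{x})+\phi''^{\downarrow D}(\mathbf{x})$.

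For (i), the key is a loop invariant: after any sequence of \textsc{combine-extend} steps the accumulated list $L$ represents exactly $\rho(\phi_1,i)\otimes\rho(\phi_2,j)$ for the current index pair $(i,j)$, so when \textsc{combine} is interrupted it returns $\rho(\phi_1,i)\otimes\rho(\phi_2,j)$; this yields the recasting~\eqref{otimes-t-k1-k2} with $K(\phi_1,\phi_2,t)=(i,j)$. Then (R1) follows from $\phi_1\succeq\rho(\phi_1,i)$, $\phi_2\succeq\rho(\phi_2,j)$ and axiom (A9); (R2) because larger $t$ only lets $(i,j)$ grow and combined truncations are monotone in their indices, again by (A9) and (iii); (R3) because with $t=0$ no \textsc{insert} runs, so $L$ is empty and the result is the all-zero valuation $n_{d(\phi_1)\cup d(\phi_2)}$; (R4) because with $t=\infty$ the loops drive $(i,j)$ up to $(n_1,n_2)$ and \textsc{insert} visits every pair $(\mathbf{x}_k,\mathbf{y}_m)$ whose projections agree on $d(\phi_1)\cap d(\phi_2)$ exactly once, reconstructing $\phi_1\otimes\phi_2$ on $\Omega_{d(\phi_1)\cup d(\phi_2)}$ (pairs with disagreeing projections index no configuration and contribute nothing). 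I expect the main obstacle to be making the \textsc{combine-extend} invariant precise: one must check that its two nested loops together enumerate exactly the index pairs in $[1,i{+}i']\times[1,j{+}j']$ that are not already in $[1,i]\times[1,j]$, with none visited twice, so that no product entry of $\rho(\phi_1,i{+}i')\otimes\rho(\phi_2,j{+}j')$ is omitted or double-counted, and that \textsc{insert}'s common-support test correctly pins down the combined valuation. The same incremental enumeration, applied to the truncations $\rho(\bar\tau_L,k_1),\rho(\bar\tau_R,k_2)$, realises $\textsc{refine}'_t$ and the function $K'$, so the full refinement algorithm instantiates; everything else is a routine assembly.
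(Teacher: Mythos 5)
Your proposal is correct and its core coincides with the paper's own proof: the paper likewise invokes Theorem~\ref{semiring-ova} for the ordered valuation algebra structure and then verifies (P1) configurationwise from distributivity of $\times$ over $+$ and (P2) from associativity and commutativity of $+$, exactly as you do. The only difference is one of scope rather than route: the paper stops after (P1) and (P2), whereas you additionally check that the pointwise sum really is a composition operator with respect to $\succeq_A$, that $\rho$ is monotone with a genuine complement $\bar\rho$, and that \textsc{combine} realises an $\otimes_t$ satisfying (R1)--(R4); these are properties the paper leaves implicit, and your identification of the \textsc{combine-extend} enumeration invariant as the one point needing care is a fair assessment of where the unwritten work lies.
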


\begin{proof}
Semiring induced valuation algebras form an ordered valuation algebra as shown in Theorem~\ref{semiring-ova}. To show that they also constitute an anytime ordered valuation algebra, we have to show properties (P1, P2), i.e. combination and projection distribute over $\oplus$:

(P1) If $p_1 = p_1' \oplus p_1''$ and $p_2 = p_2' \oplus p_2''$ then we have to show that: $
p_1 \otimes p_2 = (p_1' \otimes p_2') \oplus (p_1' \otimes p_2'') \oplus (p_1'' \otimes p_2') \oplus (p_1'' \otimes p_2'')$.

LHS applied to $\mathbf{x}$ is $p_1(\mathbf{x}^{\downarrow S}) \times p_2(\mathbf{x}^{\downarrow
T})$, where $d(p_1) = S$ and $d(p_2) = T$.
\begin{eqnarray*}
\text{RHS is }(p_1'(\mathbf{x}^{\downarrow S}) \times p_2'(\mathbf{x}^{\downarrow T})) +
(p_1'(\mathbf{x}^{\downarrow S}) \times p_2''(\mathbf{x}^{\downarrow T})) +\\
(p_1''(\mathbf{x}^{\downarrow S}) \times p_2'(\mathbf{x}^{\downarrow T})) +
(p_1''(\mathbf{x}^{\downarrow S}) \times p_2''(\mathbf{x}^{\downarrow T}))\\
= (p_1'(\mathbf{x}^{\downarrow S}) + p_1''(\mathbf{x}^{\downarrow S})) \times
(p_2'(\mathbf{x}^{\downarrow S}) + p_2''(\mathbf{x}^{\downarrow T}) = \text{LHS}\\
\text{using distributivity of} \times \text{over}\ +.
\end{eqnarray*}

(P2) We have to show that if $p = p' \oplus p''$ that $p^{\downarrow D} =
p'^{\downarrow D} \oplus p''^{\downarrow D}$, where $D \subseteq d(p)$. The LHS applied to $\mathbf{x}$ is $p^{\downarrow D}(\mathbf{x}) = \sum_{\mathbf{z}^{\downarrow D} = \mathbf{x}} p(\mathbf{z}) = \sum_{\mathbf{z}^{\downarrow D} = \mathbf{x}} (p' \oplus p'')(\mathbf{z})$, and the RHS is
\begin{eqnarray*}
(p'^{\downarrow D} \oplus p''^{\downarrow D})(\mathbf{x})&=&p'^{\downarrow D}(\mathbf{x}) + p''^{\downarrow D}(\mathbf{x}))\\
=\sum_{\mathbf{z}^{\downarrow D} = \mathbf{x}} p'(\mathbf{z}) +
\sum_{\mathbf{z}^{\downarrow D} = \mathbf{x}} p''(\mathbf{z})&=&
\sum_{\mathbf{z}^{\downarrow D} = \mathbf{x}} (p' \oplus p'')(\mathbf{z})
\end{eqnarray*}
where we use the associativity and commutativity of $+$.
\end{proof}

As stated earlier, several common instances of valuation algebra can
be considered as semiring induced. We present a couple of important examples below:

\begin{exmp}
\emph{Probability potentials} are semiring induced valuation algebras on $\mathbb{R}^+$ with the semiring operations being the arithmetic addition and multiplication. Also known as \emph{arithmetic potentials}, these describe (unnormalised) probability distributions, and thus inference in probabilistic graphical models.
\end{exmp}
\begin{exmp}
\label{dnf-val}
\emph{Disjunctive normal forms} (abbreviated as DNF) are of the form $\alpha_1 \vee \alpha_2 \cdots \vee \alpha_n$ where $\alpha_i$ is of the form $x_1 \wedge x_2 \wedge \cdots \wedge x_k$ and $x_j$ is a literal; either a logical variable or its negation. All frames are binary reflecting true and false values respectively. DNF potentials are induced by the semiring with + and $\times$ being defined as
$a + b = \max(a, b)$ and $a \times b = \min(a, b)$; which are equivalent to
the definition of logical-or and logical-and.
\end{exmp}

There are many other examples of semiring induced valuation algebras, a
detailed introduction to which can be found in \cite{kohlas2008semiring}. In certain cases,
the valuation algebra induced by the semiring has the idempotent property, i.e. $\phi \otimes \phi = \phi$; then we may use more efficient architectures for local computation such as the Lauritzen-Spiegelhalter architecture \cite{kohlas2003information}.

It is also pertinent to mention that for DNF potentials, one can alternately consider the valuation algebra over the formulae itself instead of the models \cite{kohlas1999propositional}, which simplifies computation extensively. This alternative representation is also an anytime ordered valuation algebra, but we have omitted the proof for the purposes of brevity.

\subsection{Belief functions}

Belief potentials are a generalisation of probability potentials to subsets of the
configuration space in Dempster-Shafer's theory of evidence \cite{shafer1976mathematical}. The advantage of belief potentials over standard probability theory is in their
ability to express partially available information in a manner not possible
in probability theory. This is the reason for the usage of belief functions in sensor network literature, which involves fusion of information from various sources \cite{murphy1998dempster,denoeux2000neural,sentz2002combination,yu2005alert}.

For the instance of belief functions, with the composition operator defined as
$[\phi \oplus \phi']_m(A) = [\phi]_m(A) + [\phi']_m(A)$, where $[\phi]_m$ is the mass function associated with the belief function $\phi$, our framework specialises to anytime inference in belief potentials as described in \cite{Haenni2002103}.

\begin{thm}
Belief functions, along with the composition operator defined above, and the truncation operation
$\rho(\phi,k)$ as the potential that contains the $k$ focal sets
of $\phi$ with the highest masses, form an anytime ordered valuation algebra.
\end{thm}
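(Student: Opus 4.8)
The plan is to show that belief potentials --- viewing a belief function $\phi$ on domain $D$ as an (unnormalised) mass assignment $[\phi]_m : 2^{\Omega_D} \to \mathbb{R}^+$, with unnormalised Dempster combination $[\phi_1 \otimes \phi_2]_m(A) = \sum_{B \cap C = A} [\phi_1]_m(B)\,[\phi_2]_m(C)$ (after vacuous extension of both arguments to $d(\phi_1)\cup d(\phi_2)$, an operation itself linear in the masses and hence commuting with $\oplus$) and marginalisation $[\phi^{\downarrow D}]_m(A) = \sum_{B : B^{\downarrow D} = A}[\phi]_m(B)$ --- satisfy all the requirements of Definition~\ref{def-ordered-valuation-algebra} together with (P1) and (P2). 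That this data already satisfies (A1)--(A6) is classical (see \cite{shafer1976mathematical,kohlas2003information}); so the work is to (i) install the ordered structure (A7)--(A10) and a time-bounded operator $\otimes_t$, and (ii) verify the two distributivity laws for the $\oplus$ and $\rho$ named in the statement.

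For (i) I would put $\phi \succeq \phi'$ iff $[\phi]_m(A) \ge [\phi']_m(A)$ for every $A \subseteq \Omega_D$, which is precisely the relation ``there is $\phi''$ with $\phi' \oplus \phi'' = \phi$'' since $\langle \mathbb{R}^+,+\rangle$ is a positive, cancellative commutative monoid with zero; the Lemma proved inside Theorem~\ref{semiring-ova} then shows $\succeq$ is a partial order, and (A7) holds because we only compare potentials of equal domain. The zero element $n_D$ is the identically-zero mass function, which is $\succeq_D$-least and behaves correctly under $\otimes$ and $\downarrow$, giving (A8). Axioms (A9) and (A10) follow because $+$ and $\times$ on $\mathbb{R}^+$ are monotone, so the defining sums for $\otimes$ and $\downarrow$ are monotone in each mass argument. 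A time-bounded operator $\otimes_t$ satisfying (R1)--(R4) is obtained by accumulating the products $[\phi_1]_m(B)[\phi_2]_m(C)$ one focal pair at a time in order of decreasing mass, exactly as $\textsc{combine}$ does in the semiring-induced case; this is the construction of \cite{Haenni2002103}.

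For (ii), (P1) and (P2) are a direct transcription of the semiring-induced computation: evaluating $(\phi_1' \oplus \phi_1'') \otimes (\phi_2' \oplus \phi_2'')$ at $A$ gives $\sum_{B \cap C = A}([\phi_1']_m(B) + [\phi_1'']_m(B))([\phi_2']_m(C) + [\phi_2'']_m(C))$, and distributivity of $\times$ over $+$ followed by splitting the sum yields the four-term composition on the right; likewise $[(\phi' \oplus \phi'')^{\downarrow D}]_m(A) = \sum_{B^{\downarrow D} = A}([\phi']_m(B) + [\phi'']_m(B))$ splits into $[\phi'^{\downarrow D}]_m(A) + [\phi''^{\downarrow D}]_m(A)$ by commutativity and associativity of $+$. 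It remains to check the truncation data: $\rho(\phi,k)$, which keeps the $k$ highest-mass focal sets and zeroes the rest, satisfies $\rho(\phi,k+1) \succeq \rho(\phi,k)$ because the extra focal set only adds a non-negative mass, so $\rho$ is monotone in $k$; and $\bar\rho(\phi,k)$, the potential carrying the remaining focal sets, satisfies $\bar\rho(\phi,k) \oplus \rho(\phi,k) = \phi$ since the two focal-set collections partition the focal sets of $\phi$. With (A1)--(A10), $\otimes_t$, (P1), (P2), and a valid $(\oplus,\rho,\bar\rho)$ in hand, belief potentials form an anytime ordered valuation algebra.

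The one genuine subtlety --- and the step I would flag --- is normalisation: if one insists that a mass function sum to $1$, then neither $\oplus$ (pointwise addition of masses) nor $\rho$ (dropping focal sets) preserves that constraint. The resolution is to work throughout with \emph{belief potentials}, i.e. unnormalised mass assignments in which the conflict mass $[\phi]_m(\varnothing)$ is simply carried along rather than redistributed; on that class all of the above goes through verbatim, and this is also the setting of \cite{Haenni2002103}, so our framework specialises to theirs. Everything else is routine, since the algebra of belief potentials differs from a semiring-induced valuation algebra only in that combination convolves masses along set intersection instead of multiplying pointwise along configurations --- a change that leaves all the distributivity and monotonicity arguments intact.
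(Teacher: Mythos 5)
Your proof is correct, but it takes a genuinely different route from the one in the paper. The paper's proof is essentially an appeal to prior work: it observes that belief functions already form an ordered valuation algebra by \cite{Haenni20041}, identifies the composition operator $\oplus$ with the operator $+$ and the truncation $\rho(\phi,k)$ with $\rho_k(\phi)$ of \cite{Haenni2002103}, and then invokes Theorems 9 and 10 of that reference to conclude that (P1) and (P2) hold --- the content of the proof is the dictionary between the two notations. You instead verify everything from first principles: you install the mass-function partial order and reuse the monoid lemma from Theorem~\ref{semiring-ova} for (A7), check (A8)--(A10) by monotonicity of $+$ and $\times$ on $\mathbb{R}^+$, and prove (P1) and (P2) by expanding the convolution over set intersections and the marginalisation sum, exactly as in the semiring-induced case but with the index set of the combination sum changed from ``equal configurations'' to ``intersecting focal sets.'' Your observation that the vacuous extension is linear in the masses (so it commutes with $\oplus$) and your explicit flagging of the normalisation issue are both points the paper leaves implicit by deferring to \cite{Haenni2002103}, where the unnormalised (belief-potential) setting is assumed. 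What your approach buys is a self-contained argument that makes transparent \emph{why} the belief-function case works --- the underlying arithmetic semiring $\langle\mathbb{R}^+,+,\times\rangle$ does all the distributivity and monotonicity work, just as for probability potentials; what the paper's approach buys is brevity and an explicit statement that the algorithm of \cite{Haenni2002103} is recovered as a special case of the generic framework. No gap in either.
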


\begin{proof}
Belief functions already form an ordered valuation algebra \cite{Haenni20041}, as well as permit
anytime inference \cite{Haenni2002103}. The anytime inference algorithm in \cite{Haenni2002103} turns
out to be a specific case of the generic anytime inference framework presented in this article. In particular
if we denote $\oplus := +$ in their notation, and the truncation function $\rho(\phi,k) := \rho_k(\phi)$ then
\cite[Theorem 9,10]{Haenni2002103} shows that belief functions also form an anytime ordered valuation
algebra according to the axioms in Section 3.
\end{proof}

\section{Complexity Analysis}

The anytime inference algorithm presented in Section 3 hides the time complexity of approximate inference by restricting
the accuracy of the valuations. While we don't have an explicit control over the accuracy, we can improve it by allocating more time to the refinement algorithm. In this section, we take an alternative approach of focussing on accuracy and estimating the
time complexity, which also allows us to use a tuning parameter which scales from zero accuracy (null valuations) to the valuation obtained after exact inference.

Since complexity of exact (and approximate) inference depends upon the complexity of the combination operation (usually
the more time-consuming operation among combination and focussing), we consider the specific instance of semiring-induced valuation algebras. As there are $n$ valuations, $\phi_1, \ldots, \phi_n$, the resulting BJT $N$ will have $2n-1$ nodes, $n$ of which are the valuations themselves at the leaves of the tree. We denote the maximum frame size of a variable in the semiring induced valuation algebra as $m := \max \{ |\Omega_x|, x \in V \}$. As we are representing semiring induced valuations in memory in terms of a tuple of
the configuration and its associated value, the number of words required to represent
the configuration is a key component in the time and communication complexity. The upper bound on the size of the configuration space for a valuation is thus $m^{|d(\phi)|}$.

\begin{defn}
The \emph{approximation parameter} $k$ is a tunable parameter that goes from 0 to $m^\omega$, where $\omega$ is the treewidth of the binary join tree $N$. 
\end{defn}
$m^\omega$ is the maximum
size of the configuration space that we have to process during the inward or outward propagation phase of the Shenoy-Shafer algorithm. Now we can define the following.

\begin{defn}
The approximate combination operation $\otimes^k: \Phi \times \Phi \rightarrow \Phi$ is defined as combining
the elements of the configuration space of the valuations in a semiring-induced valuation algebra, until we get $k$ resultant elements.
\end{defn}
\begin{lem}
The complexity of the approximate combination operator $\otimes^k$ is $O(k)$.
\end{lem}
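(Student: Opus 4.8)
The plan is to unfold the definition of $\otimes^k$ in terms of the helper routines already given and count elementary operations. Recall that $\otimes^k$ is realised by repeatedly calling \textsc{combine-extend}, which in turn calls \textsc{insert}; each call to \textsc{insert} does a single configuration comparison on the common support $D_1 \cap D_2$ and, if it succeeds, one semiring multiplication plus one insertion into the output list $L$. Treating each such comparison, multiplication and list insertion as a unit-cost operation (which is the standard convention in this complexity analysis, cf. the word-size remarks preceding the lemma), the cost of producing the output is proportional to the number of \textsc{insert} calls that are issued before the list $L$ reaches size $k$.

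The key step is then to argue that the number of \textsc{insert} calls is $O(k)$, not merely $O(k^2)$ or worse. First I would observe that \textsc{combine-extend}, when advancing the state from $\langle i,j,L\rangle$ by increments $i',j'$, issues one \textsc{insert} for each pair in the ``frontier'' being added — i.e. the newly covered rows and columns of the $n_1 \times n_2$ index rectangle — and that the driver \textsc{combine} only ever advances one index at a time (the calls use increments $(0,1)$ and $(1,0)$). Hence after the list has $k$ entries, at most $O(k)$ index pairs $(\mathbf{x}_i,\mathbf{y}_j)$ have been visited: each visited pair either contributes an element to $L$ (at most $k$ of these, since we stop at size $k$) or fails the common-support test, and the number of consecutive failures between successive successes is bounded because we are enumerating configurations in the sorted order in which matching supports are contiguous. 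I would make this precise by noting that in the sorted representation, for a fixed $\mathbf{x}_i$ the set of $\mathbf{y}_j$ with $\mathbf{x}_i^{\downarrow D_1\cap D_2} = \mathbf{y}_j^{\downarrow D_1\cap D_2}$ forms a contiguous block, so a standard merge-style traversal visits $O(1)$ amortised failures per success. Therefore the total number of \textsc{insert} calls, and hence the running time, is $O(k)$.

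I would finish by remarking that this bound is independent of the domain sizes $|d(\phi_1)|, |d(\phi_2)|$ and of $m$: those quantities only affect the cost of a single configuration comparison (a fixed number of words under our representation assumption), which we have absorbed into the unit cost. The main obstacle is the bookkeeping in the second step — justifying that \textsc{combine-extend}'s nested loops, summed over all the incremental calls made by \textsc{combine}, visit only $O(k)$ index pairs rather than revisiting the whole rectangle; this needs the observation that the loop bounds in \textsc{combine-extend} are set up precisely so that each pair $(k,m)$ in $[1,n_1]\times[1,n_2]$ is passed to \textsc{insert} at most once across the run, together with the termination condition in \textsc{combine} that halts as soon as $|L| = k$ so that at most $k$ of these at-most-once visits actually occur before we stop (plus the $O(1)$-amortised non-matching visits interleaved with them).
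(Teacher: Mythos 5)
There is a genuine gap at the crux of your argument. Everything hinges on the claim that the number of \textsc{insert} calls issued before $L$ reaches size $k$ is $O(k)$, which you justify by an amortised merge-style argument: matching supports are contiguous in the sorted order, so failures cost $O(1)$ amortised per success. But the pseudocode does not perform a merge-style traversal. \textsc{combine-extend} unconditionally enumerates \emph{every} index pair in the growing $i\times j$ sub-rectangle of $[1,n_1]\times[1,n_2]$ and calls \textsc{insert} on each; after the staircase has advanced the frontier to $(i,j)$, the cumulative number of \textsc{insert} calls is $i\cdot j$, independent of how many of them passed the common-support test. If the support test rarely succeeds (e.g.\ the two valuations share a variable and most configuration pairs disagree on it), then accumulating $k$ entries in $L$ can require $i\cdot j \gg k$ calls --- in the worst case $\Theta(n_1 n_2)$ --- so the bound you want does not follow. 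The contiguity claim itself is also unjustified: the lists are sorted lexicographically on the \emph{full} configurations (or by decreasing weight), and neither order makes the pairs agreeing on $D_1\cap D_2$ contiguous unless the shared variables happen to be most significant in the variable ordering, which nothing in the paper guarantees.

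The paper's own proof takes a much simpler (and blunter) route: it declares the worst case to be the one where the two domains are disjoint, so that there is no support test at all, every pairwise product is an output element, and producing $k$ elements costs exactly $k$ multiplications. In other words, it counts $O(1)$ work per \emph{produced} element and never attempts to account for failed support tests --- precisely the issue you correctly identified as the real difficulty. Your instinct that the failed tests must be bounded is sound, and a proof that handled them would be stronger than the paper's; but closing that gap would require either modifying the traversal to an actual two-pointer merge on the projections to $D_1\cap D_2$ (with the lists sorted by that projection), or redefining $k$ to count examined pairs rather than resultant elements. As written, your amortisation step is the missing idea, not a routine bookkeeping detail.
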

\begin{proof}
The worst-case scenario is when the configuration spaces are independent (no variables in common). Then there is no requirement for common support and we can take
the pairwise multiplication of the elements of the configuration space, till we get $k$ elements, giving us $O(k)$ complexity.
\end{proof}

The \textsc{inward-approx}$(N,k)$ algorithm is defined similarly to the \textsc{inward} algorithm, with the
instances of the time-bound combination operator $\otimes_t$ replaced by the approximate combination operator
$\otimes^k$. In the following, $K(\phi,\psi,k)$ returns $(k_1, k_2)$ such that $\rho(\phi, k_1) \otimes \rho(\psi, k_2)$
has at most $k$ elements.

\hrulefill
\begin{algorithmic}
\Function{inward-approx}{$N,k$}
\State for all $n \in \mathit{leaves}(N)$ do $\phi_s(n) \leftarrow \phi(n)^{-\Delta(n)}$
\While {$\mathit{next}(N) \ne \emptyset$}
\State select $n$ from $\mathit{next}(N)$
\State $(k_1, k_2) \gets K(\phi_s(L(n)), \phi_s(R(n)), k)$
\State $\phi(n) \leftarrow \phi_s(L(n)) \otimes^k \phi_s(R(n))$;
\State $\phi_s(n) \leftarrow \phi(n)^{-\Delta(n)}$
\State $\tau(L(n)) \gets \rho(\phi_s(L(n)), k_1)$
\State $\tau(R(n)) \gets \rho(\phi_s(R(n)), k_2)$
\State $\bar\tau(L(n)) \gets \bar\rho(\phi_s(L(n)), k_1)$
\State $\bar\tau(R(n)) \gets \bar\rho(\phi_s(R(n)), k_2)$
\State $\phi_s(n) \leftarrow \phi(n)^{-\Delta(n)}$
\State $s \leftarrow s-1$

\EndWhile
\EndFunction
\end{algorithmic}
\hrulefill

\begin{thm}
The time complexity of \textsc{inward-approx}$(N,k)$ in the Shenoy-Shafer architecture, with the approximation parameter of $k$, given that there
are $n$ valuations in the knowledgebase is $O((n-1)k)$. 
\end{thm}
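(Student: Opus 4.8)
The plan is to amortise the running time over the nodes of the binary join tree $N$ and show that each contributes $O(k)$. Recall $N$ has $2n-1$ nodes, $n$ of which are leaves holding the knowledgebase valuations; hence there are exactly $n-1$ internal nodes. Since every iteration of the \textsc{while} loop in \textsc{inward-approx} assigns a value to a previously-$\mathit{nil}$ message $\phi_s(n)$ at an internal node $n$, and $\mathit{next}(N)$ only ever offers a node whose two children already carry messages, the loop executes exactly $n-1$ times, once per internal node. So it suffices to bound the cost of a single loop body by $O(k)$; the total is then $O((n-1)k)$, with the leaf-initialisation line treated as a $k$-independent preprocessing pass over the input knowledgebase.

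For one iteration at node $n$ I would argue as follows. The call $K(\phi_s(L(n)),\phi_s(R(n)),k)$ returns $(k_1,k_2)$ with $k_1,k_2\le k$: in the worst case the two domains are disjoint, the common-support test is vacuous, and $k_1 k_2\le k$ by the definition of $\otimes^k$; sharing variables only strengthens the support constraint and shrinks the output, so the bound persists. The combination $\phi(n)\leftarrow\phi_s(L(n))\otimes^k\phi_s(R(n))$ costs $O(k)$ by the preceding Lemma, and by construction $\phi(n)$ has at most $k$ configurations in its sparse representation. The four truncations $\rho(\cdot,k_i)$ and $\bar\rho(\cdot,k_i)$ are linear scans, and the marginalisation $\phi(n)^{-\Delta(n)}$ buckets the $\le k$ configurations of $\phi(n)$ by their restriction to $d(n)\backslash\Delta(n)$ and sums within each bucket; using hashing for the bucketing this is $O(k)$, so the whole body is $O(k)$.

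The hard part will be ensuring that the messages $\phi_s(\cdot)$ travelling up the tree really do stay within $O(k)$ elements, so that the scans and the projection above are genuinely $O(k)$ rather than proportional to the raw configuration-space size $m^{|d(\phi)|}$ of an intermediate valuation. I would close this gap by induction on the tree: for any internal node, $\phi_s(n)=\phi(n)^{-\Delta(n)}$ with $\phi(n)$ the output of $\otimes^k$, hence $|\mathrm{graph}(\phi_s(n))|\le k$; the base case is the leaf messages, whose computation is exactly the $k$-independent preprocessing already set aside (and which is itself $O(k)$ in the regime $k\ge m^\omega$ within which the approximation parameter is meant to range). Composing ``$O(k)$ per internal node'' with ``$n-1$ internal nodes'' yields the claimed $O((n-1)k)$ bound; as a sanity check, at $k=m^\omega$ this recovers the familiar $O((n-1)m^\omega)$ cost of exact inward propagation in the Shenoy-Shafer architecture.
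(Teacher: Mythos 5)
Your proposal is correct and follows essentially the same route as the paper: count the $n-1$ internal-node combinations, charge $O(k)$ to each via the lemma on $\otimes^k$, and observe that projection over at most $k$ configurations costs $O(k)$ and so does not change the asymptotics. The extra care you take --- the induction showing messages $\phi_s(\cdot)$ stay within $k$ elements, and the $k=m^\omega$ sanity check --- is left implicit in the paper (the latter appears separately in its ``matching the exact case'' discussion), but it is the same argument.
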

\begin{proof}
There are $n-1$ combinations as the number of combinations in the binary join tree is the same as the number of non-leaf nodes. As each combination has
a complexity of $O(k)$, we get a complexity of $O((n-1)k)$. Projection has  a complexity of $O(k)$ as there are $k$ elements in the configuration space, so at most $k-1$ summations, which is the case when we are
marginalising to the null set (equivalent to eliminating all the variables), thus it does not change the asymptotic complexity.
\end{proof}

We get the same time complexity for an analogous $\textsc{refine-approx}$ algorithm, with a modification to lines 6--7 of
$\textsc{refine}$ to combine at most $k$ elements.

\textbf{Matching in the exact inference case}. In the exact inference case, the complexity is known to be in the class \#P-hard. In the discussion on complexity \cite{generic-inference}, Kohlas and Pouly derive the estimate $O(|V|. f(\omega))$ where $\omega$ is the treewidth, with $f(x) = m^x$ for the case of semiring induced valuation algebras with variables having a upper bound frame size of $m$. $|V|$ is the number of vertices in the join tree. Substituting $|V| = n, k = m^\omega$ in the time complexity $O(n-1)k$ and taking $k = m^\omega$, we get the same time complexity as the exact inference case; thus the approximate time complexity obtained in terms of the approximation parameter $k$ gives us a transition from $k=0$ (null valuations, obtained when we set the $t=0$ in $\textsc{inward}(N,t)$) to $k=m^\omega$, the exact inference case.

\textbf{Estimation of accuracy from elapsed time}.
It can be useful to derive an estimate of the accuracy of a valuation given the elapsed time of the algorithm in specific cases. Here, we shall consider the example of probability potentials. The time-bound
combination operator combines the configurations with the largest weight first so that we get diminishing returns; the accuracy also depends on the sparsity of the probability potential. For simplicity we consider uniform distributions, where the weights are uniformly distributed in the configuration space. Then we can state the following:
\begin{lem}
The fractional error estimate compared to the exact probability potential is
\begin{equation}
\epsilon(t) = 1 - \max{\left(1,\frac{t}{m^\omega c(n-1)}\right)}
\end{equation}
where $\omega$ is the treewidth, $c$ is the constant time required to combine two elements in the configuration space, and $n$ is the number of valuations in the knowledgebase.
\end{lem}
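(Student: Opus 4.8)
The plan is to count elementary configuration-multiplication operations and then convert the fraction of operations completed into the fraction of probability mass captured, using the uniformity assumption to make these two fractions coincide.

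First I would pin down the cost of exact inference. By the definition of the treewidth $\omega$ of the BJT $N$ and the way $N$ covers the inference problem, every intermediate valuation produced during propagation has a configuration space of size at most $m^\omega$; hence each of the $n-1$ combination steps performed by \textsc{inward} requires at most $m^\omega$ pairwise multiplications of configurations, each costing the constant $c$, while projection (costing $O(k)$ with $k\le m^\omega$ by the preceding complexity lemma) contributes only to lower-order terms. Thus the total running time of exact inference is $T = m^{\omega} c (n-1)$. Next I would invoke the fact that, for a totally ordered semiring such as $\mathbb{R}^+$, the time-bounded combination $\otimes_t$ processes the operands' configurations in decreasing-weight order, as fixed in the definition of $\rho$. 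For a uniform potential all weights are equal, so this order is immaterial: after elapsed time $t$ the algorithm has completed a fraction $\min(1, t/T)$ of the scheduled multiplications, and therefore has inserted that same fraction of the configurations occurring in the exact root valuation. Since every such configuration carries an identical weight, the mass accounted for is exactly $\min(1,t/T)$ times the total mass, so the proportion of mass still missing is $\epsilon(t) = 1 - \min\!\bigl(1,\, t/(m^{\omega} c (n-1))\bigr)$, which is the displayed estimate once the outer operation is read as capping the ratio at $1$.

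The main obstacle is the bookkeeping of how the time budget is apportioned: \textsc{inward} splits $t$ across steps as $t/s$ with $s$ decreasing, and in a full inward/outward sweep the budget is further divided between the two phases, so one must argue that, irrespective of the split, the total number of elementary multiplications performed in wall-clock time $t$ is $t/c$ up to lower-order terms — equivalently, that the global fraction $t/T$ is the right invariant. A secondary subtlety is that early intermediate valuations may have strictly fewer than $m^\omega$ configurations, making $m^\omega(n-1)$ an over-estimate of the true number of multiplications; under the stated uniform worst-case hypothesis this is precisely the regime intended, and one notes that replacing the over-estimate by the exact count only sharpens the bound in the same direction, so the estimate stands as an upper bound on the fractional error.
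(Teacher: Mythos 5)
Your argument is essentially the paper's own: both take the exact inference cost to be $m^\omega c(n-1)$, use the uniform-weight assumption to equate the fraction of elementary multiplications completed in time $t$ with the fraction of probability mass captured at the root, and conclude $\epsilon(t)=1-\min\bigl(1, t/(m^\omega c(n-1))\bigr)$; you are also right that the displayed $\max$ must be read as a cap (i.e.\ $\min$), since otherwise $\epsilon(t)\le 0$. Your two flagged subtleties (the $t/s$ apportionment of the budget and intermediate valuations smaller than $m^\omega$) are real gaps the paper silently elides, but they only affect constants and the direction of the bound, as you note.
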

\begin{proof}
As each configuration has an uniform weight, the accuracy of combination at the root node (which is the solution to the inference problem obtained from the inward propagation algorithm) is directly proportional to the allocated time which is on average $t/(n-1)$ as there are $n-1$ combinations. Considering that each combination takes $c$ units, and in the worst-case each configuration has weight $1/m^\omega$ (for a normalised potential; for unnormalised, this introduces a constant factor which is cancelled out by considering a fractional error estimate), we get the fractional error estimate as above.
\end{proof}
As can be easily seen, $\epsilon(0) = 1$, and $\epsilon(O((n-1).m^\omega)) = 0$ where $O((n-1).m^\omega)$ is the exact inference time complexity.

\section{Implementation}
We implemented the anytime inference algorithm using the Python programming language, on a Core i5 CPU with 4GB RAM. While we have shown anytime inference in a Bayesian network here, the framework, being generic, can be applied to other valuation algebras which satisfy the necessary axioms.

\begin{figure}[htp]
\centering
\includegraphics[scale=0.6]{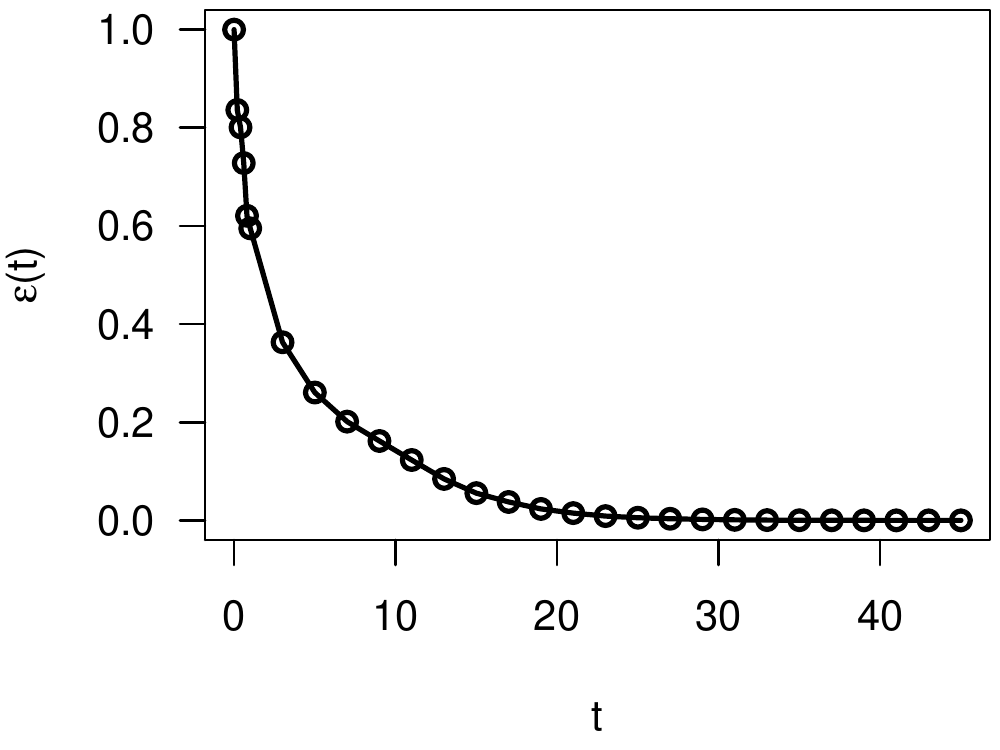}
\caption{Anytime inference progress in the \textsc{child} dataset}
\end{figure}

The figure shows progress of anytime inference on the \textsc{child} dataset, which was used as a
case study for exact inference in \cite{cowell2007probabilistic}. The progress is shown as a function of the fractional error estimate with time units (the actual total time for the series of successive refinements, up to the exact valuation is $<10\text{s}$):
\begin{equation}
\epsilon(t) = 1 - \frac{\sum L_{\phi_t}}{\sum L_{\phi}}
\end{equation}
Here the sum is over the weights of the configurations $L_\phi$ of a valuation $\phi$; $\phi_t$ is the valuation obtained at the root
after time $t$, and $\phi$ is the exact valuation. As expected, the fractional error estimate
converges to zero as we obtain the exact valuation.

\section{Conclusion}
In this work, we have shown that we can construct anytime algorithms for
generic classes of valuation algebras, provided certain conditions are satisfied. We have also shown that the important subclass of semiring induced valuation algebras admit an anytime inference algorithm as they meet the aforementioned conditions. This is useful as semiring induced valuation algebras include
important valuation algebra instances like probability potentials,
DNF potentials and relational algebras, among others.

From a broader perspective, the advantage of operating in the generic framework of valuation algebras has been addressed before \cite{generic-inference}; we can target a large class of problems using a unified framework; the inference or projection problem can be found in various forms: Fourier transforms, linear programming and constraint satisfaction problems. Enriching the valuation algebra structure through extensions is thus useful. Anytime inference in particular has a wide spectrum of applications. We also plan to study the applicability of our framework across these various domains in future work.

We are currently working on implementation of other instances of anytime ordered valuation algebras, as well as conducting a complexity analysis of the algorithm in a distributed setting using the Bulk Synchronous Parallel \cite{bsp} model.
\bibliographystyle{ecai}
\pagebreak
\bibliography{ecai}
\end{document}